\documentclass[10pt,journal,compsoc]{IEEEtran}
\IEEEoverridecommandlockouts
% The preceding line is only needed to identify funding in the first footnote. If that is unneeded, please comment it out.
\usepackage{cite}
\usepackage{amsmath,amssymb,amsfonts}
\usepackage{algorithmic}
\usepackage{graphicx}
\usepackage{textcomp}
\usepackage{xcolor}
\usepackage{subfigure}
\usepackage{hhline}
\usepackage{epstopdf}
\def\BibTeX{{\rm B\kern-.05em{\sc i\kern-.025em b}\kern-.08em
    T\kern-.1667em\lower.7ex\hbox{E}\kern-.125emX}}

\usepackage{tikz}

\usepackage{amsthm}
\newtheorem{definition}{Definition} [section]
\newtheorem{theorem}{Theorem}  [section]

\begin{document}

\title{Nothing Wasted: Full Contribution Enforcement in Federated Edge Learning
\thanks{This work is partially supported by the US NSF under grant CNS-2105004.}
\author{Qin~Hu, Shengling Wang, \textit{Senior Member, IEEE,} Zehui Xiong, Xiuzhen Cheng, \textit{Fellow, IEEE}}% <-this % stops a space
\IEEEcompsocitemizethanks{
\IEEEcompsocthanksitem Qin Hu is with the Department of Computer and Information Science, Indiana University-Purdue University Indianapolis, USA. \protect \\ E-mail: qinhu@iu.edu
\IEEEcompsocthanksitem Shengling Wang is with the School of Artificial Intelligence, Beijing Normal University, China. \protect \\ E-mail: wangshengling@bnu.edu.cn
\IEEEcompsocthanksitem Zehui Xiong is with the Pillar of Information Systems Technology and Design, Singapore University of Technology and Design, Singapore. \protect \\ E-mail: zehui\_xiong@sutd.edu.sg
\IEEEcompsocthanksitem Xiuzhen Cheng is with the School of Computer Science and Technology, Shandong University, China. \protect \\ E-mail: xzcheng@sdu.edu.cn
}
}
%\author{\IEEEauthorblockN{Qin Hu}
%\IEEEauthorblockA{\textit{Department of Computer \& Information Science} \\
%\textit{Indiana Univ.-Purdue Univ. Indianapolis}\\
%Indiana, USA \\
%qinhu@iu.edu}
%\and
%\IEEEauthorblockN{Shengling Wang}
%\IEEEauthorblockA{\textit{Department of Artificial Intelligence} \\
%\textit{Beijing Normal University}\\
%Beijing, China \\
%wangshengling@bnu.edu.cn}
%\and
%\IEEEauthorblockN{Xiuzhen Cheng}
%\IEEEauthorblockA{\textit{Department of Computer Science} \\
%\textit{Shandong University}\\
%Qindao, China \\
%xzcheng@sdu.edu.cn}
%}
\IEEEtitleabstractindextext{
\begin{abstract}
The explosive amount of data generated at the network edge makes mobile edge computing an essential technology to support real-time applications, % that require efficient responses to end users, 
calling for powerful data processing and analysis provided by machine learning (ML) techniques. In particular, federated edge learning (FEL) becomes prominent in securing the privacy of data owners by keeping the data locally used to train ML models. %For tacit collaboration in FEL, most of the existing studies optimize learning algorithms and communication configurations during the FEL process, neglecting the importance of FEL system composition before the FEL begins.
Existing studies on FEL either utilize in-process optimization or remove unqualified participants in advance. 
%Even though there exist a few works taking precautions by rigidly removing unqualified participants or assuming the availability of perfect information of others, they are not practical in general cases. To fill this gap, 
In this paper, we enhance the collaboration from all edge devices in FEL to guarantee that the ML model is trained using all available local data to accelerate the learning process. To that aim, we propose a \textit{collective extortion (CE)} strategy under the imperfect-information multi-player FEL game, which is proved to be effective in helping the server efficiently elicit the full contribution of all devices without worrying about suffering from any economic loss. Technically, our proposed CE strategy extends the classical extortion strategy in controlling the proportionate share of expected utilities for a single opponent to the swiftly homogeneous control over a group of players, which further presents an attractive trait of being impartial for all participants. 
Moreover, the CE strategy enriches the game theory hierarchy, facilitating a wider application scope of the extortion strategy. Both theoretical analysis and experimental evaluations validate the effectiveness and fairness of our proposed scheme.

\end{abstract}
\begin{IEEEkeywords}
Edge computing, federated learning, game theory.
\end{IEEEkeywords}}

\maketitle

\section{Introduction}

The ubiquitous deployment of Internet-connected mobile devices leads to the amount of data generated at the network edge increasing exponentially, fostering the transformative computing paradigm, namely mobile edge computing \cite{xiao2019edge}. According to a recent report, the global market of edge computing is \$3.6 billion in 2020 and is anticipated to reach \$15.7 billion by 2025 \cite{market}. Facilitated by faster networking technologies such as 5G, edge computing becomes promising to support real-time applications, which calls for vigorous data processing and analysis capability at the edge. 
Thanks to the explosive growth of artificial intelligence, edge computing becomes more intelligent via implementing machine learning (ML) algorithms to achieve various functions such as classification and prediction.

However, since the data generated at the edge devices may be highly sensitive to the end users, it might be inappropriate to deploy conventional centralized ML algorithms which need to physically collect all training data from the devices. Federated learning (FL), a representative of distributed ML, turns into the aptest for edge computing, based on which the edge server and all connected devices accomplish training the same ML model in a collaborative manner, and thus this paradigm is also termed~\emph{federated edge learning} (FEL)~\cite{khan2020federated,lim2021towards}. More specifically, no device explicitly uploads the generated data in FEL, but their data can still contribute to training the shared ML model by iterative local learning, global aggregating, and updating~\cite{liu2020federated}.

Within this collaboration system, the most challenging but critical issue is to guarantee that all participants cooperate tacitly. To fulfill this goal, two lines of research have been carried out, namely \textit{in-process} \cite{mills2020communication,jiang2019model,wang2019adaptive,zhu2019broadband,amiri2020machine,yang2020federated,ahn2019wireless,tran2019federated,xu2019elfish,prakash2020coded,abad2020hierarchical,zeng2020energy,yang2020age,yang2019scheduling,amiri2020update}     
and \textit{in-advance} \cite{nishio2019client,kang2019incentive,ye2020federated,zhan2020learning,zhan2020infocom,pandey2020crowdsourcing,le2021incentive} FEL optimization, with the former improving the FEL system performance via optimizing learning algorithms or communication configurations during the FEL process, while the latter achieving the desirable performance through designing effective schemes to better establish and maintain the FEL system by avoiding inefficiency before the FEL process begins. Usually, taking precautions can enhance the FEL system as a preparative, so in-advance optimization becomes more cost-efficient than checking for the leaks during the working process. The state-of-the-art accomplishes this objective via either \textit{device selection} \cite{nishio2019client,kang2019incentive,ye2020federated}, which directly filters out unqualified devices, or \textit{incentive mechanism design} \cite{zhan2020learning,zhan2020infocom,pandey2020crowdsourcing}, which relies on a strong assumption of perfect information in the Stackelberg game~\cite{nie2018stackelberg}. Nevertheless, in practice, we may not have enough devices that can afford the elimination, and the devices may not own the full knowledge about each other. %Moreover, none of the existing work considers the 

%考虑到server和device一轮一轮进行FEL的长期合作行为，本文利用博弈论对他们进行建模，挑战是什么:Processes are tightly coupled and interests are loosely coupled
In this paper, we consider that an edge server and multiple devices collaborate in an FEL process repeatedly to optimize user experience in the long run. The server is the coordinator in charge of the whole FL process  %e.g., initial model selection and global model aggregation, 
while the devices %share their locally collected data via local learning so 
contribute their local learning results to obtain the globally trained model as a compensation at the end of each FL round\footnote{We term the ``round'' in this paper as finishing a specific FL task and obtaining a well-trained ML model, instead of one time of local training in FL or an epoch in the traditional ML model training phase.}. Within the whole FL process, the local training is only visible to and manageable by individual devices, leaving the room for selfish behaviors of perfunctorily contributing to the FEL via training the ML model using partial local datasets. To suppress this phenomenon, we utilize the multi-player simultaneous game to model the interactions between the edge server and devices in an FEL system, where none of them has perfect information about others, and aim at eliciting the full contribution of devices from the perspective of the server, instead of intolerantly eliminating malicious devices. However, the tight coupling of action and utility in this game makes it a dilemma for the server to play against devices because recklessly changing behaviors can lead to the server a decreasing utility. This brings us a question: \textit{is it possible for the server to entice full contributions from the devices without concerning about its utility loss?}

%为解决这些挑战，我们利用剥削策略，提出CE策略来集体控制device的行为
To answer this question, we resort to the extortion scheme which was first introduced as a special form of the zero-determinant (ZD) strategy \cite{press2012iterated}. By employing the extortion strategy, any player can independently control the proportion between the expected utility of itself and that of the opponent, which implies the potential to help the server control the utility in playing against devices. Nonetheless, the classical extortion strategy is derived for the two-player game, which is not applicable to our problem involving multiple players. In addition, it is clearly not efficient to directly carry it out between the server and every device in a one-by-one manner. To address this challenge, we put forward a \textit{collective extortion (CE)} strategy, which can achieve the goal of effortlessly controlling the overall utility of all devices with only one-time setting for the server. What's important, we comprehensively analyze the potential of the proposed CE strategy on enforcing the full cooperation of the devices, and further validate that it works impartially for all players with respect to utilities.

The main contributions are summarized as follows:
\begin{itemize}
\item We model the interactions between the edge server and devices in FEL as a multi-player simultaneous game, based on which, for the first time, we derive the powerful CE strategy to efficiently control the relative utility proportion between the CE adopter and a group of opponents.
\item The proposed CE strategy can not only effectively suppress the selfish behaviors of devices in FEL via enforcing their full contributions, but also enrich the theoretical system of game theory through extending the original two-player extortion strategy to the multi-player situation, and thus enlarging its application scope.
\item We demonstrate the effectiveness and fairness of the proposed CE strategy on driving the full cooperation of the devices with both theoretical analysis and experimental evaluations, which benefits the long-term system stability and liveness.
\end{itemize}

%本文的贡献
%多人博弈建模,第一次推导出ce strategy，
%不仅能解决我们的问题，还丰富了博弈论领域的策略方案
%无额外代价地强制device的全力付出，抑制自私行为，保证FEL的系统性能与长期稳定,公平性带来的liveliness

%organization
The rest of this paper comprises the following six sections. Section \ref{sec:related} investigates the most related work in improving FEL performance and Section \ref{sec:formulation} introduces our problem formulation. 
%model the interactions between the edge server and devices as a multi-player game and reveal the unexpected all-defection result. 
In Section \ref{sec:extortion}, we deduce the CE strategy for the multi-player situation,  followed by the analysis on its potential to enforce the full contribution from the devices in Section \ref{sec:mechanism}. Experimental evaluations are presented in Section \ref{sec:evaluation}. And we conclude this paper in Section \ref{sec:conclusion}.

\section{Related Work}\label{sec:related}
Existing research focusing on enhancing the overall system performance of FEL can be classified into \textit{in-process} 
and \textit{in-advance} %\cite{nishio2019client,kang2019incentive,zhan2020learning,zhan2020infocom} 
optimization, depending on whether the operation steps lie in the FEL process or before that.

For the in-process optimization, researchers tried to improve the FEL performance via \textit{designing advanced learning algorithms} \cite{mills2020communication,jiang2019model,wang2019adaptive,zhu2019broadband,amiri2020machine,yang2020federated,ahn2019wireless,tran2019federated,xu2019elfish,prakash2020coded} or \textit{optimizing communication configurations} \cite{abad2020hierarchical,zeng2020energy,yang2020age,yang2019scheduling,amiri2020update}.       
In \cite{mills2020communication}, Mills \textit{et al.} proposed an adapting FedAvg algorithm based on the Adam optimization, which overcomes the shortcoming of the original FedAvg  with longer convergence time in dealing with the non-independent identically distributed data generated in internet-of-things (IoT). 
Considering about the constrained resources of edge devices, Jiang \textit{et al.} \cite{jiang2019model} proposed a scheme named PruneFL to adaptively adjust the model size for reducing training cost while maintaining comparable accuracy with the full model. 
To better control the global aggregation frequency in edge computing with limited resources, Wang \textit{et al.} \cite{wang2019adaptive} theoretically analyzed the gradient descent convergence bound. Leveraging on the over-the-air computation, several studies \cite{zhu2019broadband,amiri2020machine,yang2020federated,ahn2019wireless} achieved more efficient FL aggregation by taking advantage of the superposition of signals in the wireless multiple access channel. 
Tran \textit{et al.} \cite{tran2019federated} considered the trade-off between computation and communication latency and that between learning time and energy consumption in FL for wireless networks via solving a non-convex optimization problem. 
To deal with the straggler concern in FEL, a framework named ELFISH was proposed in \cite{xu2019elfish} to achieve resource-aware learning via dynamically masking computation-intensive neurons, while Prakash \textit{et al.} designed CodedFedL \cite{prakash2020coded} based on coded computing to inject structured redundancy in FL to compensate the negative impacts of straggling updates.
On the other hand, aiming to facilitate the FEL from the perspective of communications, optimal resource allocation was investigated in \cite{abad2020hierarchical,zeng2020energy,lim2021decentralized} and various transmission scheduling policies were designed in \cite{yang2020age,yang2019scheduling,amiri2020update,ng2020joint}.

For the in-advance FEL performance optimization, there are several recent studies which mainly focus on \textit{device selection} \cite{nishio2019client,kang2019incentive,ye2020federated,kang2020reliable} and \textit{incentive mechanism design} \cite{zhan2020learning,zhan2020infocom,pandey2020crowdsourcing,le2021incentive,lim2020hierarchical}. In \cite{nishio2019client}, to achieve the best learning result, a novel protocol was devised to select qualified devices according to their computational resource and communication conditions. In \cite{kang2019incentive}, Kang \textit{et al.} proposed a reputation based mechanism for screening out reliable devices to obtain high-quality model updates in FEL using the contract theory. 
To facilitate vehicular edge learning, selectively collecting good local model updates was considered in \cite{ye2020federated} using the two-dimension contract theory. 
Besides, Zhan \textit{et al.} designed deep reinforcement learning (DRL) based incentive mechanisms for edge-based FL in \cite{zhan2020learning,zhan2020infocom}, where the optimal pricing strategy of the aggregator and the best contribution strategy of the participants can be derived based on the hierarchical Stackelberg game. %Note that the strategy of a participant in \cite{zhan2020learning} is the amount of contributed data while that in \cite{zhan2020infocom} is the CPU-cycle frequency contribution.
While Pandey \textit{et al.} solved the incentive problem in FL with communication efficiency consideration using a crowdsourcing framework and the two-stage Stackelberg game for equilibrium analysis,  
Le \textit{et al.} studied the incentive mechanism design for FL in wireless scenario via an auction game.

Relying on the power of taking precautions in enhancing the FEL performance, one can find that the existing studies either rigidly filter out unsatisfied devices or assume the availability of perfect information to implement, which can be impractical as there is hardly redundant number of participants or full knowledge about each other in FEL. To overcome these shortcomings, we utilize the multi-player simultaneous game to model the interactions between the edge server and devices with nobody having perfect knowledge of others, and then design an effective CE strategy to enforce the full contribution of selfish devices with fairness guaranteed.
% instead of directly removing them, and further prove that the strategy is impartial for any participants.

\section{Problem Formulation}\label{sec:formulation}
\subsection{System Model}
As illustrated in Fig. \ref{fig:fel}, we consider an FEL system consisting of one edge server, denoted as $s$, and a set of edge devices, denoted as $\mathcal{N}=\{d_1,\cdots,d_n\}$. The system aims at providing better services to end users via conducting collaborative machine learning based on the data generated by all edge devices. Specifically, we assume that the FEL is conducted in a round-by-round manner, where the round is defined as \textit{accomplishing a certain learning task with the objective of training a global ML model with good performance}. Each device joins a round of FEL task by contributing the locally learned results obtained through training the initial ML model using the local dataset for multiple iterations. As a compensation, the server, who works as the FEL coordinator, returns the final well-trained ML model to the participating devices once the current round of FEL task finishes.
%In the ideal situation, any device will train the received ML model using the whole local dataset
%or not according to his own willingness. In the ideal situation, any device with enough locally collected data during a period of time should join in the upcoming found of FEL, which can guarantee that the server as the FEL coordinator will return the finally well-trained ML model based on the global data. %This sort of contribution on FEL with the corresponding gain of improved model for devices is relatively deterministic. 

However, some devices may behave selfishly by utilizing partial of his\footnote{We use ``he'' and ``she'' to respectively represent anyone of the devices and the server.} local data to conduct the local training of ML model, by which they can make extra profits, such as saving computational resources and using the rest of the data to further improve the final ML model only for themselves. 
%able to join the FEL due to objective and subjective reasons. Here the objective reason refers to the case where the device fails to generate enough amount of data to support local training of the initial ML model, such as a mobile phone running out of power during a period of time. While the subjective reason could be that a device with enough data tries to exploit other devices' contribution via intentionally not participating in the current round of FEL. %To differentiate these two cases, we define the former one of objective non-participation as normal behavior and the latter one as malicious behavior of the device.
This sort of malicious behavior comes to be difficult for the server to timely detect and prevent due to the following two reasons. First, the server has no access to the local datasets held by devices for directly acquiring their size information or training efforts; 
%so as to evaluate their efforts; 
second, the data distribution of devices is usually skewed in FEL, making it impossible to infer the size information, either. 
%the real reason of some device not contributing to FEL. 
In this case, the server may behave strategically via choosing to return or not return the final ML model to the devices, thus helping suppress the selfishness in an opportunistic way, which will be detailed in the next subsection. 

For better understanding, we summarize major notations used in the following sections in Table \ref{tab:notations}.
 
 \begin{figure}[htbp]
\centering
\includegraphics[width=0.38\textwidth]{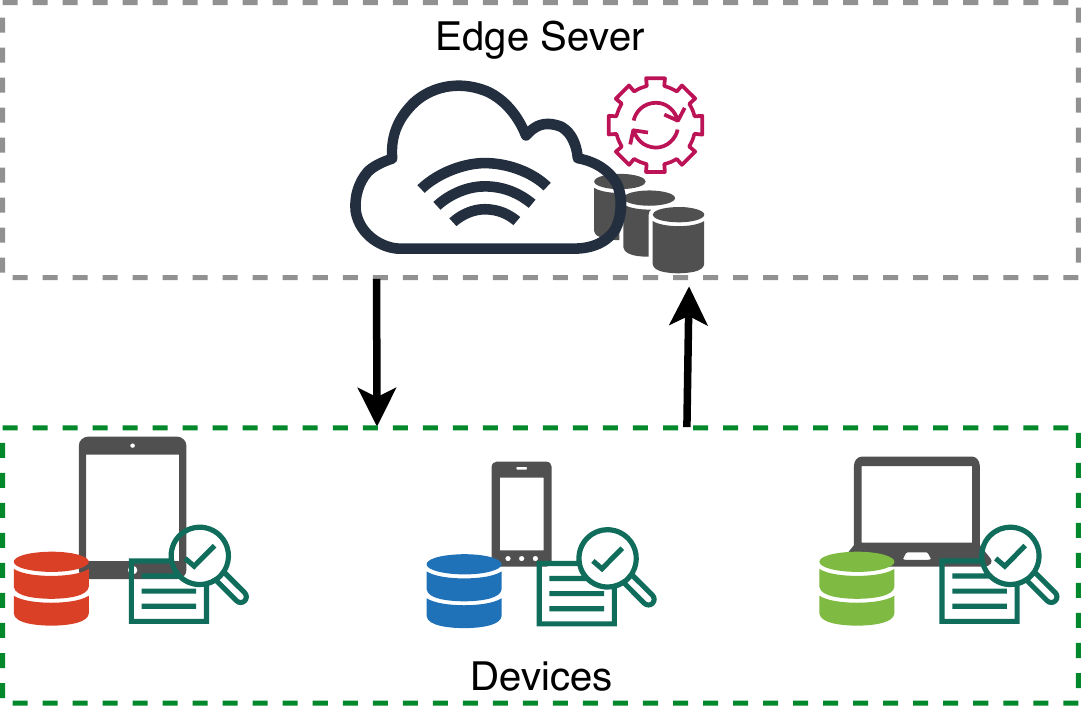}
\caption{The FEL system architecture.}
\label{fig:fel}
\end{figure}

\begin{table}[]
    \centering
    \footnotesize
    \caption{Summary of Notations.}
    \label{tab:notations}
    \begin{tabular}{| c | p{6.4cm} |}
    \hline
    Notation & Explanation \\
    \hline
    $x_i\in \{C,D\}$   & The action of the server playing against device $d_i$ \\
    \hline
    $y_i\in \{C,D\}$   & The action of any device $d_i$ \\
    \hline
    $u_s$ & The utility of the server \\
    \hline
    $u_i$ & The utility of any device $d_i$ \\
    \hline
    $\phi(\cdot)$ & The profit of the server \\% given the action vector of all devices $\mathbf{y}$ \\
    \hline
    $b_i(\cdot)$ & The cost of the server sending the final model to $d_i$ \\
    \hline
    $\epsilon(\cdot)$ & The error of the final model \\ %given the action vector of all devices $\mathbf{y}$ \\
    \hline
    $\psi(\cdot)$ & The profit of the device $d_i$ \\
    \hline
    $m_i(\cdot)$ & The extra income of the device $d_i$ using partial data\\
    \hline
    \end{tabular}
\end{table}

\subsection{Game Formulation}
It is clear that neither the server nor any device can know the action of each other when they make their own decisions, which can be exactly modeled by a multi-player simultaneous game. Even though it seems that there are only two types of players, i.e., the edge server and the device, the number of players involved in the decision making and outcome witnessing of this game is multiple. In particular, the number of devices playing against the server in this FEL scenario can be large, and every device has his own preference on game strategy selection and operates with independent system parameters related to their benefits and costs. 

Formally, we define the server's action of returning the final ML model to the device as cooperation ($C$) and the action of not sharing the well-trained ML model as defection ($D$). For the device, we regard the action of conducting local learning using the full local dataset in a round of FEL as cooperation ($C$), while the behavior of employing only partial local data for FEL training can be viewed as defection ($D$). 
For clarity, we utilize $x_i$ to denote the action of the server playing against device $d_i$ and $y_i$ to express the action of device $d_i$ in this game. Thus, we have $x_i,y_i \in \{C,D\}$, where $i\in \{1,2,\cdots,n\}$.% with $n$ indicating the total number of devices in the FEL system.

It is worth noting that in the case of $y_i=D$, the specific amount of data utilized by each device $d_i$ during the FEL process can be heterogeneous from other peering devices. % and even dynamic in the time dimension. 
Here we treat any selfish behavior of not fully using the local data for model training as defection no matter how severe or slight this malicious action is. This qualitative consideration makes it easy for us to focus more on the elimination of devices' undesirable activities in the subsequent quantitative modeling and algorithm design sections.
%这里我们不考虑设备D情况下具体的训练数据量是多少，而focus在设备不使用全部数据进行local training这一事实，把所有自私行为都视为背叛行为进行建模，并在后续模型和算法设计中try to eliminate这种恶意行为。

Given the above actions, we can define the utility function of device $d_i$ as
\begin{align}\label{eq:device_utility}
u_i = \alpha_i \psi_i(x_i) + \beta_i m_i(y_i),
\end{align}
where $\alpha_i,\beta_i > 0$ are scale parameters, $\psi_i(x_i)$ is the profit the device can obtain according to the server's action of whether or not returning the final model, and $m_i(y_i)$ represents the extra income that the device can make by not fully using his local data to train the model, such as the spared computation, communication, and energy resource consumption. 

Here we have $\psi_i(C) > \psi_i(D)$ because the server's returned final model can enable the device to provide more efficient service to the end user so as to increase the user's satisfactory degree, which can be regarded as a higher payoff for the device. %Note that $\psi_i(D) > 0$ since the device can still obtain some intermediate results even without the final model from the server. 
For easy expression, we use $\overline{\psi_i}$ and $\underline{\psi_i}$ to respectively represent $\psi_i(C)$ and $\psi_i(D)$. 
Considering that the cooperation action of contributing to FEL based on the full dataset leaves no extra room for the device to make more profit, we assume $m_i(C) = 0$. For the selfish behavior of using only partial local data for training the ML model, with $\delta_i \in [0,1)$ denoting the percentage of device $d_i$'s dataset contributed to FEL\footnote{As $\delta_i$ is a parameter related to the personal preference of each device regarding being selfish, here we assume that $\delta_i$ is a relatively stable value, not fluctuating drastically in the game rounds, which can be approximately estimated by the edge server through historical behaviors.}, we can define $m_i(D) = \lambda_i (1-\delta_i)$, where $\lambda_i$ is a device-dependent positive constant indicating the heterogeneity of devices.

%objective non-participation behavior of the device indicates that he has no enough data to improve the global ML model, while the subjective non-participation is an intentional free-riding action to withhold the value of the generated local data which can be used to locally improve the ML model, we define $m_i(C) < m_i(D)$.

Next we define the utility of the server as
\begin{align}\label{eq:server_utility}
u_s = \alpha_s \phi (\mathbf{y}) - \beta_s \sum_{i=1}^n b_i(x_i),
\end{align}
where %$R$ is the profit of the server gained from the whole round of FEL with the globally trained ML model;
 $\alpha_s,\beta_s > 0$ are scalars; $\phi (\mathbf{y})$ refers to the profit of the server gained from this round of FEL with the globally trained ML model and $\mathbf{y} = (y_1,y_2,\cdots,y_n)$ denotes the action vector of the devices; $b_i(x_i)$ is the cost of the server to send device $d_i$ the final trained model. Since the final model returned to all devices is the same, the main cost of sending it to every device is assumed to be the same as an example here\footnote{For different costs of the server to send the final model to devices, the overall research methodology proposed in this paper can still be applied although the derivation details may vary.}, with $b_i(C)=\frac{\rho}{n}$, where $\rho$ as a positive scalar denotes the overall cost of the server, and $b_i(D)=0$. 
 %, and thus the total cost of the server is simply the sum of the cost for updating with each device. As an example, here we can simply define $b_i(C)=\frac{\rho}{n}$ being a positive constant and $b_i(D)=0$. 
 
The profit of the server obtained from the final model can be relatively complicated to depict, which is generally dependent on the specific ML model trained in the FEL system. In this paper, taking the convolutional neural network (CNN) based classifier as an example, we can describe $\phi (\mathbf{y})$ as follows:
\begin{align}\label{eq:profit}
\phi (\mathbf{y}) = & \frac{w}{1+\exp(r \varepsilon (\mathbf{y}) - t)},
\end{align}
%k (\sum_{y_i = C} F_i + \sum_{y_i=D} \delta_i F_i )^{-a} - k (\sum_{i=1}^n F_i)^{-a}.
where $w,r,t$ are positive scalars, and $\varepsilon(\mathbf{y})$ represents the classification error of the final trained model, jointly determined by the actions of all devices. Specifically, the server's profit $\phi$ reaches the maximum if $\varepsilon (\mathbf{y})$ approaches zero; and if the error is too large, $\phi$ becomes very small. 
Inspired by the power-law function proposed in \cite{chen2018my,johnson2018predicting}, we can define an exemplary $\varepsilon(\mathbf{y})$ as
\begin{align}
\label{eq:error}
\varepsilon(\mathbf{y}) = k (\sum_{y_i = C} F_i + \sum_{y_i=D} \delta_i F_i )^{-a}.
\end{align}
In the above equation, %$G$ is the total size of data generated at devices contributing to this round of FEL computing; 
$F_i$ denotes the data size of device $d_i$; and $k,a \geq 0$ are tuning scalars to depict the non-linear relationship between the classification error and the training data size, where the larger the total data size used for training, the smaller the error. %, which is consistent with the model proposed in \cite{chen2018my,johnson2018predicting}. 
%Particularly, %devices with action $C$ contribute local data fully for training while 
%devices with action $D$ contribute local data partially to FEL, leading to the difference of the final model performance. And \eqref{eq:loss} captures this phenomenon via comparing the classification errors of two situations, i.e., with or without the defective devices.  
%In detail, the former part of \eqref{eq:loss} is the error of the case with defective devices, % with full data from cooperative devices and partial data from defective devices, 
%while the latter part is the improved and thus smaller error of the case without any defective device. 
%%based on all useful data from both the cooperative and defective devices. 
%It is clear that if the number of devices choosing $D$ is zero, both parts would be the same and $\phi (\mathbf{y})=0$.
%Overall, this equation demonstrates the model performance loss brought by the missing data of defective devices which should have been used to train the ML model. 
Combining \eqref{eq:profit} and \eqref{eq:error}, one can find that the less the number of defective devices, the larger the effective global training dataset, the smaller the error, which results in the larger profit for the server. In the extreme case where all devices choose $C$ (or $D$), $\varepsilon$ can reach the minimum (or maximum), and accordingly, $\phi$ turns to be the maximum (or minimum), denoted as $\overline{\phi}$ (or $\underline{\phi}$).  

Note that for other ML model training tasks in FEL, we may propose different formulas to describe the profit function $\phi (\mathbf{y})$, but its main characteristics about all cooperative devices producing $\overline{\phi}$ while all defective devices leading to $\underline{\phi}$ will generally hold. Therefore, the overall analysis framework, as well as the subsequent full contribution enforcement scheme, can still work in a similar way.

\begin{theorem}\label{thrm:condition}
The FEL system can form to function only when $\alpha_i (\overline{\psi_i}-\underline{\psi_i})>\beta_i \lambda_i(1-\delta_i)$ and $\alpha_s (\overline{\phi}-\underline{\phi}) > \beta_s \rho$.
%device and server be motivated to cooperate.
\end{theorem}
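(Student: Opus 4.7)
The plan is to show that each of the two inequalities is exactly the condition under which mutual cooperation strictly Pareto-dominates mutual defection for the corresponding party, and that such dominance is a necessary prerequisite for the FEL system to have a cooperative outcome worth enforcing. First I would enumerate the utility values implied by \eqref{eq:device_utility} and \eqref{eq:server_utility}. For a device $d_i$, using $m_i(C)=0$ and $m_i(D)=\lambda_i(1-\delta_i)$, the four possible utilities over the actions $(x_i,y_i)$ are $\alpha_i\overline{\psi_i}$ at $(C,C)$, $\alpha_i\overline{\psi_i}+\beta_i\lambda_i(1-\delta_i)$ at $(C,D)$, $\alpha_i\underline{\psi_i}$ at $(D,C)$, and $\alpha_i\underline{\psi_i}+\beta_i\lambda_i(1-\delta_i)$ at $(D,D)$. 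For the server, using $b_i(C)=\rho/n$ and $b_i(D)=0$, the two extreme utilities are $\alpha_s\overline{\phi}-\beta_s\rho$ when every $x_i$ and $y_i$ is $C$, and $\alpha_s\underline{\phi}$ when every $x_i$ and $y_i$ is $D$.

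Next I would argue that the FEL system ``forms to function'' only if the all-cooperate outcome strictly improves upon the all-defect outcome for both sides; otherwise there is no surplus to redistribute, no incentive for the server to absorb the aggregate cost $\beta_s\rho$, and no reason for any device to forgo the defection bonus $\beta_i\lambda_i(1-\delta_i)$, so the game would simply collapse to trivial mutual defection in which no meaningful training or model delivery ever occurs. Comparing the device payoffs at $(C,C)$ and $(D,D)$ gives $\alpha_i\overline{\psi_i}>\alpha_i\underline{\psi_i}+\beta_i\lambda_i(1-\delta_i)$, which rearranges to the first stated inequality. Comparing the server payoffs analogously gives $\alpha_s\overline{\phi}-\beta_s\rho>\alpha_s\underline{\phi}$, equivalent to the second stated inequality. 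This establishes necessity of both conditions.

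I would close by observing that, together with the obvious $\beta_i\lambda_i(1-\delta_i)>0$ ordering between the ``temptation'' and ``reward'' payoffs and between ``punishment'' and ``sucker'' payoffs, the two derived conditions make the per-device subgame a prisoner's dilemma, which is precisely the structural setting in which an extortion-type strategy (as developed later in the paper) can be exploited to drive defectors toward full cooperation. This framing also clarifies what ``to function'' means operationally for the subsequent sections.

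The main obstacle is interpretational rather than computational: the phrase ``form to function'' must be pinned down before the algebra bites. I will adopt the reading that mutual cooperation must strictly dominate the trivial no-trust outcome for the server and for every device, since any weaker reading would either render the CE mechanism vacuous (nothing to enforce) or leave a player with no rational reason to participate at all, in which case the FEL system cannot be said to function.
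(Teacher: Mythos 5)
Your proposal is correct and follows essentially the same argument as the paper: both compare the all-cooperation and all-defection utilities for each device ($\alpha_i\overline{\psi_i}$ versus $\alpha_i\underline{\psi_i}+\beta_i\lambda_i(1-\delta_i)$) and for the server ($\alpha_s\overline{\phi}-\beta_s\rho$ versus $\alpha_s\underline{\phi}$), and require strict dominance of mutual cooperation to obtain the two stated inequalities. Your additional remarks on the prisoner's-dilemma structure and the interpretation of ``form to function'' are consistent with, but go slightly beyond, what the paper's proof itself states.
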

\begin{proof}
To ensure that such an FEL system comprising one server and multiple devices functions well, the basic requirement is that all-cooperation behaviors can make it more beneficial than the case of all defection for any player. Otherwise, there is not enough motivation for any device or server to collaboratively participate in this FEL.

For device $d_i$, the utility of the all-cooperation case is $\alpha_i \overline{\psi_i}$ and that of all-defection is $\alpha_i \underline{\psi_i}+\beta_i \lambda _i(1-\delta_i)$. The above requirement leads to $\alpha_i \overline{\psi_i} > \alpha_i \underline{\psi_i}+\beta_i \lambda_i(1-\delta_i)$, which is equivalent to $\alpha_i (\overline{\psi_i}-\underline{\psi_i})>\beta_i \lambda_i(1-\delta_i)$. %A similar analysis can be conducted for the server to complete the proof.

Similarly, for the server, the utility with cooperation actions from all players is $\alpha_s \overline{\phi} - \beta_s \rho$, while  all defection results in the utility of $\alpha_s \underline{\phi}$. Thus the FEL system requires that $\alpha_s \overline{\phi} - \beta_s \rho > \alpha_s \underline{\phi}$, which equals $\alpha_s (\overline{\phi}-\underline{\phi}) > \beta_s \rho$.
\end{proof}

Based on the above definitions of utilities, we can formally define an \textit{FEL game} as follows.
\begin{definition}[FEL Game]\label{def:FEL}
In the FEL system consisting of one server and $n$ devices, their interactions regarding whether to return the final model and whether to fully contribute to the learning process can be defined as a normal-form game $\mathcal{G} = (\{s\}\cup \mathcal{N}, \{C,D\}, \{u_s\}\cup \{u_i\})$  with $i \in \{1,\cdots,n\}$.
\end{definition}

\subsection{Dilemma in the FEL Game}
In fact, there exists a defection dilemma in the FEL game, which can be summarized in the following theorem.

\begin{theorem}\label{thrm:all_d}
%Given the utility functions of the server and devices, 
In the FEL game defined in \ref{def:FEL}, $D$ is the best action for any player.
\end{theorem}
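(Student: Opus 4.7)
The plan is to show that $D$ is in fact a strictly dominant action for every player by decoupling, for each player, the part of the utility that depends on their own action from the part that depends on others. Concretely, I would carry out two parallel case analyses, one for the server and one for a representative device $d_i$.

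First, for the server, I would look at $u_s$ as a function of its own action profile $\mathbf{x}=(x_1,\dots,x_n)$ holding $\mathbf{y}$ fixed. Only the cost term $\beta_s \sum_i b_i(x_i)$ depends on $\mathbf{x}$, because $\phi(\mathbf{y})$ is determined solely by the devices. Since $b_i(C)=\rho/n>0=b_i(D)$ and $\beta_s>0$, switching any $x_i$ from $C$ to $D$ strictly increases $u_s$ by $\beta_s \rho/n$, regardless of what the devices choose. Hence $x_i=D$ for all $i$ strictly dominates every other action profile for the server.

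Second, for a representative device $d_i$, I would hold the server's action $x_i$ and all other players' actions fixed and examine $u_i = \alpha_i \psi_i(x_i)+\beta_i m_i(y_i)$. The only piece that depends on $y_i$ is $\beta_i m_i(y_i)$, and by construction $m_i(D)=\lambda_i(1-\delta_i)>0=m_i(C)$ (with strict inequality since $\delta_i\in[0,1)$ and $\lambda_i>0$). Thus choosing $y_i=D$ strictly increases $d_i$'s utility by $\beta_i \lambda_i(1-\delta_i)$ over $y_i=C$, no matter what the server or other devices do.

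Combining these two observations gives that $D$ is a strictly dominant action for every player in $\mathcal{G}$, establishing the theorem. I do not anticipate any genuine obstacle here: the argument is essentially the multi-player prisoner's dilemma structure baked into the utility definitions \eqref{eq:device_utility} and \eqref{eq:server_utility}. The only subtlety worth flagging explicitly is that the server's benefit $\phi(\mathbf{y})$ and the device's benefit $\psi_i(x_i)$ are each exogenous to the player whose action one is optimizing over, which is precisely what turns the joint game into independent single-variable comparisons. This dilemma is also what motivates the CE strategy developed in the subsequent section, since Theorem~\ref{thrm:condition} guarantees that the all-$C$ outcome Pareto-dominates the all-$D$ outcome even though the latter is the dominant-strategy equilibrium.
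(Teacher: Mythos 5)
Your proposal is correct and follows essentially the same route as the paper's own proof: both arguments reduce to observing that $m_i(D)>m_i(C)$ makes $D$ strictly better for each device regardless of the server's action, and that $b_i(C)>b_i(D)$ makes $D$ strictly better for the server regardless of $\mathbf{y}$. Your phrasing in terms of strict dominance via decoupling the action-dependent term is just a slightly cleaner packaging of the paper's case analysis.
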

\begin{proof}
For any rational player, the best action can be derived by comparing the utility values under situations of choosing $C$ and $D$.
For any device $d_i$, the server's action $x_i$ being $C$ or $D$ clearly affects his utility, thus the device can consider these two cases separately. If $x_i = C$, his utility is $u_i = \alpha_i \psi_i(C) + \beta_i m_i(y_i)$, and since $m_i(C) < m_i(D)$, there exists $u_i(y_i=C) < u_i(y_i=D)$, which leads to his best action of $y_i = D$. If $x_i = D$, the device's utility becomes $u_i = \alpha_i \psi_i(D) + \beta_i m_i(y_i)$, where the function $m_i(\cdot)$ enforces the best action $D$ for the device again. In other words, no matter what action the server takes, the best action of the device is to defect.

Similarly, for the server, no matter what the action vector of the devices $\mathbf{y}$ is, the only factor affecting her utility that she can control is $x_i$. Referring to \eqref{eq:server_utility}, it can be concluded that only when the last item becomes zero can $u_s$ be maximized, which corresponds to $x_i = D$.
\end{proof}

According to Theorem \ref{thrm:all_d}, one can observe that the individual optimal action in the game among the server and the devices is always $D$, which means that the device always decides to take part in the FEL using partial dataset and the server never shares the final well-trained model to any device. This is obviously harmful for the overall benefit of the FEL system where the global model cannot be trained based on all generated data, leading to the reduced model performance. Thus, it becomes critical to solve this all-defection dilemma. 
Here we consider that the server is in charge of driving the cooperation from the devices due to the following two reasons. First, as the upper-level controller of the FEL system, the server hopes to obtain an optimal collaborative learning result, which becomes the motivation for her to get rid of this undesired situation; second, as the coordinator, the server can exert punishment to defective devices via not returning the final model, which indicates her capability to suppress malice. 

To elicit full contributions from the devices, one intuitive solution for the server is to design cooperation incentive schemes, which usually costs more for the server to entice profit-driven devices. Thus, it is imperative to design a new scheme embedded in this multi-player game process while preventing any interest loss for the server. Referring to \eqref{eq:server_utility}, one can observe that the utility of the server is collectively affected by the actions of all devices as well as herself. Thus, any reckless behavior change without a delicate plan would lead to undesired damage for the server, 
making it a critical challenge for the server to \textit{manage the behaviors of the devices without concerning her own utility.} 
Inspired by \cite{press2012iterated}, we find that the extortion mechanism, as a type of the zero-determinant (ZD) strategy, presents the merit of enabling the adopter to unilaterally control a proportional relationship between the expected utilities of two players, which implies the potential of helping solve the server's challenge. 
%, where the rational opponent will lean to cooperate more so as to obtain a higher payoff. 

However, the conventional extortion strategy was originally developed for the two-player game, which is not directly applicable to our problem. Although one possible application idea is to carry it out between the server and each device, we can clearly notice the low efficiency of this one-by-one method. 
%directly applying it to our problem can only be conducted between the server and each device. Considering that the number of devices can be very large, it becomes time- and energy-consuming for the server to incentivize devices to cooperate one by one, which presents a pressing need for a cost-efficient solution. To that aim, 
Thus, we resort to extending the extortion strategy to the multi-player scenario and name it as the \textit{collective extortion (CE) strategy}, which will be elaborated in the next section. %based on which an efficient full contribution enforcement mechanism is designed.

\section{Collective Extortion Strategy}\label{sec:extortion}
As mentioned above, the classical extortion strategy derived in the two-player game cannot effectively fit in the FEL game scenario. In this section, we extend the two-player extortion strategy to the multi-player version, namely the CE strategy, which can solve the defection dilemma in the FEL game without suffering from the inefficiency of directly implementing the extortion strategy for each device. 

To be specific, we aim to enable the server to collectively control the overall utilities of all devices so as to further drive their cooperation behaviors, %, where the server has no need to select a strategy playing against every device individually but collectively. 
so here we set the action of the server playing against all devices to be homogeneous, denoted as $x$. 
Since there exist $n$ devices, 
the number of players in our FEL game is $n+1$ with each player choosing from two actions $C$ and $D$. And thus there exist $\eta = 2^{n+1}$ possible game results in total, which can be expressed as follows,
\begin{align*}
xy_1y_2\cdots y_n \in \{ \underbrace{ C \overbrace{CC\cdots C}^{n} }_{g_1}, \underbrace{ C \overbrace{CC\cdots}^{n-1}D }_{g_2}, \cdots, \underbrace{ D \overbrace{DD\cdots D}^{n} }_{g_{\eta}} \},
\end{align*}
where $g_i$ denotes the $i$-th game result.

In light of the conclusion in \cite{press2012iterated} that it is not disadvantageous for the short-memory player compared to the long-memory one, we assume that both the server and the devices have one-step memory and select their actions based on the game results in the last round. Thus, one can introduce the definitions of their mixed strategies as follows.

\begin{definition}[Mixed Strategy of the Server]
The server's mixed strategy is defined as $\mathbf{p} = (p_1,p_2,\cdots,p_{\eta})$ with $p_j$ denoting her conditional probability of choosing cooperation given the game result in the last round $g_j$.
\end{definition}

\begin{definition}[Mixed Strategy of the Device $d_i$]
The device $d_i$'s mixed strategy is defined as $\mathbf{q}^i = (q_1^i,q_2^i,\cdots,q_{\eta}^i)$ with $q_j^i$ denoting the conditional cooperation probability of device $d_i$ given the game result in the last round $g_j$.
\end{definition}

Accordingly, the defection probability of the server is $1-p_j$  and that of $d_i$ is $1-q_j^i$, where $j\in\{1,2,\cdots,\eta\}$. Then the Markov state transition matrix of this FEL game can be written as
$$\mathbf{M} = [M_{uv}]_{\eta \times \eta},$$ 
where the element $M_{uv}$ is the probability of transiting from the previous game result $g_u$ to the current one $g_v$ and can be defined as
\begin{align*}
M_{uv} = P \prod_{i=1}^n Q_i.
\end{align*}
In the above equation, $P$ and $Q_i$ are calculated according to
\begin{align*}
P = (p_u)^{z_0}(1-p_u)^{1-z_0}, \\
Q_i = (q_u^i)^{z_i}(1-q_u^i)^{1-z_i},
\end{align*}
where $z_0$ and $z_i,~i\in\{1,\cdots,n\},$ denote the actions of the server and device $d_i$ in the round with game result $g_v$, respectively. And they are assigned values according to 
\begin{align*}
z_0 = 
\begin{cases}
1, & x=C, \\
0, & x=D,
\end{cases} ~~~~
z_i = 
\begin{cases}
1, & y_i=C, \\
0, & y_i=D.
\end{cases}
\end{align*}
In other words, when the server's behavior is $x=C$ in the current game result $g_v$, it is the former part of $P$ functioning and thus $P=p_u$; otherwise, $P=1-p_u$. Next, $Q_i$ is derived in the same way according to the action of device $d_i$.

Then we define a non-negative vector $\mathbf{v}=(v_1,v_2,\cdots,v_{\eta})$ with the feature of $v_1+v_2+\cdots+v_{\eta}=1$, denoting the probability distribution over all possible game results in the stable state. Since $\mathbf{M}$ is the transition matrix, %we can have the new probability distribution as $\mathbf{v}\mathbf{M}$. 
we know that when the Markov process reaches the stable state, there exists $\mathbf{v}\mathbf{M}=\mathbf{v}$, which equals $\mathbf{v}(\mathbf{M}-\mathbf{I})=\mathbf{v}\mathbf{M}'=0$ with $\mathbf{I}$ denoting the unit matrix and $\mathbf{M}'=\mathbf{M}-\mathbf{I}$. 

Let $\mathrm{Adj}(\cdot)$ and $\mathrm{det}(\cdot)$ be the adjugate and determinant operations on a matrix, respectively. 
According to the Cramer's rule, there exists $\mathrm{Adj}(\mathbf{M}')\mathbf{M}'=\mathrm{det}(\mathbf{M}')\mathbf{I}=0$. Comparing it with the above equation, one can conclude that $\mathbf{v}$ is proportional to every row of $\mathrm{Adj}(\mathbf{M}')$. Accordingly, the dot product of $\mathbf{v}$ and any vector $\mathbf{f}=(f_1,f_2,\cdots,f_{\eta})^T$ can be proportionally calculated by
\begin{align}\label{eq:vf}
\mathbf{v} \cdot \mathbf{f} \equiv \mathrm{det} [\mathbf{M}_1, \mathbf{M}_2, \cdots, \mathbf{M}_{\eta - 1}, \mathbf{f}],
\end{align}
where $\mathbf{M}_i,~i\in\{1,\cdots,\eta-1\}$, denotes the $i$-th column of $\mathbf{M}'$, and ``$\equiv$'' represents the proportional relationship.

Next, in light of the fact that the elementary transformation on any matrix  does not change its determinant value, we conduct column transformations on the matrix in \eqref{eq:vf}. More specifically, we first locate that the $\frac{\eta}{2}$-th column of this matrix refers to the game result of the server's cooperation and all devices' defection, i.e., $xy_1y_2\cdots y_n = CDD \cdots D$, and the $i$-th element in this column $\mathbf{M}_\frac{\eta}{2}$ can be expressed as $M_{i \frac{\eta}{2}} = p_i \prod_{j=1}^n (1-q_i^j)$; when adding all columns before the $\frac{\eta}{2}$-th column to $\mathbf{M}_\frac{\eta}{2}$, we obtain the new form of this column as follows:
\begin{align*}
\mathbf{M}^*_\frac{\eta}{2} = [p_1-1, p_2-1, \cdots, p_{\frac{\eta}{2}}-1, p_{\frac{\eta}{2}+1}, \cdots, p_{\eta}]^T.
\end{align*}
Then \eqref{eq:vf} can be written as
\begin{align*}
\mathbf{v} \cdot \mathbf{f} \equiv & \mathrm{det} [\mathbf{M}_1, \cdots, \mathbf{M}^*_\frac{\eta}{2} ,\cdots, \mathbf{M}_{\eta - 1}, \mathbf{f}] \\
= &  \mathrm{det} \left[
\begin{matrix}
p_1 \prod_{i=1}^{n} q^i_1-1 & \cdots &p_1-1& \cdots & f_1 &\\
\vdots                                  & \ddots &\vdots & \ddots & \vdots &\\
p_{\frac{\eta}{2}} \prod_{i=1}^{n} q^i_{\frac{\eta}{2}}& \cdots & p_{\frac{\eta}{2}}-1 &\cdots & f_{\frac{\eta}{2}} &\\
\vdots                                  & \ddots &\vdots & \ddots & \vdots &\\
p_\eta \prod_{i=1}^{n} q^i_\eta & \cdots &p_\eta & \cdots & f_\eta
\end{matrix} 
\right].
\end{align*}
It is clear that the $\frac{\eta}{2} $-th column is only related to the strategy of the server. Therefore, given any constant parameter $\gamma \neq 0$, the server can adjust the strategy $\mathbf{p}$ to meet the condition $\mathbf{M}^*_{\frac{\eta}{2} }= \gamma \mathbf{f}$ so as to achieve 
\begin{equation}\label{eq:zero_value}
\mathbf{v} \cdot \mathbf{f} = 0,
\end{equation}
because the $\frac{\eta}{2} $-th column and the last one of the matrix are proportional to each other.

In fact, the above proportional value can be converted to a real value by normalizing on the value of $\mathbf{v} \cdot \mathbf{1}$, where $\mathbf{1}$ denotes the all-one vector with the size of $\eta$. In particular, the expected utility of the server, denoted by $E_s$, and that of device $d_i$, denoted by $E_i$,  can be calculated by
\begin{align*}
E_s = \frac{\mathbf{v} \cdot \mathbf{u}_s}{\mathbf{v} \cdot \mathbf{1}} = \frac{\mathrm{det} [\mathbf{M}_1, \cdots, \mathbf{M}_{\eta - 1}, \mathbf{u}_s]}{\mathrm{det} [\mathbf{M}_1, \cdots, \mathbf{M}_{\eta - 1}, \mathbf{1}]}, \\
E_i = \frac{\mathbf{v} \cdot \mathbf{u}_i}{\mathbf{v} \cdot \mathbf{1}} = \frac{\mathrm{det} [\mathbf{M}_1, \cdots, \mathbf{M}_{\eta - 1}, \mathbf{u}_i]}{\mathrm{det} [\mathbf{M}_1, \cdots, \mathbf{M}_{\eta - 1}, \mathbf{1}]}, 
\end{align*}
where $\mathbf{u}_s = (u_s^1,\cdots,u_s^\eta)$ and $\mathbf{u}_i = (u_i^1,\cdots,u_i^\eta)$ are respectively the utility vector of the server and that of device $d_i$ following the same order of game results $(g_1,\cdots,g_\eta)$ and can be calculated according to \eqref{eq:server_utility} and \eqref{eq:device_utility}.

Next, we can derive the CE strategy as follows.
\begin{theorem}
By setting the strategy $\mathbf{p}$ to satisfy
\begin{align}\label{eq:extortion_strategy}
\mathbf{M}^*_{\frac{\eta}{2} }= \gamma \big( (\mathbf{u}_s - u_s^1 \mathbf{1} ) - \chi \sum_{i=1}^n ( \mathbf{u}_i -  u_i^1 \mathbf{1}) \big) ~(\gamma \neq 0),
\end{align}
the server can enforce an extortionate relationship between their expected utilities 
\begin{align}\label{eq:extortion_payoff}
E_s - u_s^1 = \chi \sum_{i=1}^n (E_i -  u_i^1),
\end{align}
with $\chi \geq 1$ being the extortion factor.
\end{theorem}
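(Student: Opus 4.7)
The plan is to reduce the claimed identity to a linear relation satisfied by the stationary distribution $\mathbf{v}$, exploiting the determinant machinery already established in \eqref{eq:vf}. First, I would define the auxiliary test vector
$$\mathbf{f} := (\mathbf{u}_s - u_s^1 \mathbf{1}) - \chi \sum_{i=1}^n (\mathbf{u}_i - u_i^1 \mathbf{1}),$$
and observe that the hypothesis \eqref{eq:extortion_strategy} on $\mathbf{p}$ is exactly $\mathbf{M}^*_{\frac{\eta}{2}} = \gamma \mathbf{f}$. By the argument that led to \eqref{eq:zero_value}, this makes the $\tfrac{\eta}{2}$-th column of the matrix inside the determinant of \eqref{eq:vf} proportional to the last column, so the determinant vanishes and we obtain $\mathbf{v} \cdot \mathbf{f} = 0$.

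Second, I would expand this single identity by bilinearity of the dot product:
$$\mathbf{v} \cdot \mathbf{u}_s - u_s^1 (\mathbf{v} \cdot \mathbf{1}) \;=\; \chi \sum_{i=1}^n \bigl( \mathbf{v} \cdot \mathbf{u}_i - u_i^1 (\mathbf{v} \cdot \mathbf{1}) \bigr).$$
Dividing through by $\mathbf{v} \cdot \mathbf{1} > 0$ and recognizing the normalized inner products as the expected utilities $E_s = (\mathbf{v}\cdot\mathbf{u}_s)/(\mathbf{v}\cdot\mathbf{1})$ and $E_i = (\mathbf{v}\cdot\mathbf{u}_i)/(\mathbf{v}\cdot\mathbf{1})$ defined just above \eqref{eq:extortion_strategy}, the relation rearranges immediately into \eqref{eq:extortion_payoff}. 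Up to this point the proof is essentially a bookkeeping exercise on top of the determinant identity already derived in the section.

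The genuine substance, and what I expect to be the main obstacle, is \emph{feasibility}: showing that for appropriate $\gamma \neq 0$ and every $\chi \geq 1$, the linear system $\mathbf{M}^*_{\frac{\eta}{2}} = \gamma \mathbf{f}$ has a solution $\mathbf{p}$ whose entries all lie in $[0,1]$. Recall that the entries of $\mathbf{M}^*_{\frac{\eta}{2}}$ are $p_j - 1$ for $j \le \eta/2$ (cases in which the server cooperated in the previous round) and $p_j$ for $j > \eta/2$ (cases in which she defected), so the recovered probabilities are $p_j = 1 + \gamma f_j$ on the first block and $p_j = \gamma f_j$ on the second. One must therefore verify that the components $f_j$ have the correct signs on the two blocks so that some $\gamma$ of sufficiently small magnitude keeps every $p_j$ inside $[0,1]$.

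The sign analysis should rest on the structure of the utilities relative to the all-cooperation baseline $g_1$: under Theorem \ref{thrm:condition}, the server's utility is maximized at $g_1$, so $u_s^j - u_s^1 \le 0$ throughout, whereas individual devices strictly gain by unilaterally defecting against a cooperating server, so the signs of $u_i^j - u_i^1$ split across the two blocks in a predictable pattern. The role of the constraint $\chi \ge 1$ is precisely to calibrate how strongly the device-side correction pulls against the server-side term so that the required sign pattern of $\mathbf{f}$ is preserved on both halves. Once this is verified, any $\gamma$ with $|\gamma|$ bounded by the reciprocal of $\max_j |f_j|$ yields an admissible strategy $\mathbf{p}$, completing the proof.
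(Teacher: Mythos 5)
Your first two paragraphs are exactly the paper's proof: define $\mathbf{f} = (\mathbf{u}_s - u_s^1 \mathbf{1}) - \chi \sum_{i=1}^n (\mathbf{u}_i - u_i^1 \mathbf{1})$, invoke the determinant identity to get $\mathbf{v}\cdot\mathbf{f}=0$, and normalize by $\mathbf{v}\cdot\mathbf{1}$ to obtain the extortionate relation; the paper does nothing more than this, so the core argument is correct and identical in approach. The feasibility question you raise in your last two paragraphs is deferred by the paper to Section \ref{subsec:feasibility}, where the constraints on $p_j$ are merely listed rather than resolved, so it is not part of the theorem's proof obligation here; note, however, that your proposed sign analysis starts from a false premise --- the server's utility is \emph{not} maximized at $g_1$ (by \eqref{eq:server_utility} she earns $\alpha_s\overline{\phi}-\beta_s\rho$ at $CC\cdots C$ but $\alpha_s\overline{\phi}$ at $DC\cdots C$, which is larger), so $u_s^j - u_s^1 \le 0$ does not hold throughout and that route to feasibility would need to be reworked.
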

%where $\gamma \neq 0$ is a scalar parameter

\begin{proof}
Given the expressions of $E_s$ and $E_i$, the server can enforce a zero value for any linear combination of the expected payoffs based on \eqref{eq:zero_value}. Particularly, if the server hopes to realize an extortionate share of expected utilities larger than the all-cooperation payoff $E_s - u_s^1 = \chi \sum_{i=1}^n (E_i -  u_i^1)$, the server can set $\mathbf{f} = (\mathbf{u}_s - u_s^1 \mathbf{1} ) - \chi \sum_{i=1}^n ( \mathbf{u}_i -  u_i^1 \mathbf{1})$ because the utility relationship is equivalent to $(E_s - u_s^1 ) - \chi \sum_{i=1}^n (E_i -  u_i^1) = 0$. 
Accordingly, we can know that the server's strategy $\mathbf{p}$ needs to comply with $\mathbf{M}^*_{\frac{\eta}{2} }= \gamma \mathbf{f} = \gamma \big( (\mathbf{u}_s - u_s^1 \mathbf{1} ) - \chi \sum_{i=1}^n ( \mathbf{u}_i -  u_i^1 \mathbf{1}) \big)$.
\end{proof}

With a feasible strategy satisfying the above condition, the server can unilaterally control to ensure that her own expected utility difference to $u_s^1$, i.e., the utility at all-cooperation state, is always $\chi$ times of the sum of all devices' expected utility differences to $u_i^1$. Based on the one-for-all feature of this strategy, we name it the \textit{collective extortion (CE) strategy}. In fact, CE not only expands the application scope of the original extortion strategy from the two-player game to the multi-player game, but also is effective to solve the problem of full contribution stimulation which will be elaborated in the next section.

It is worth noting that the base values in the CE strategy, i.e., the subtrahends $u_s^1$ and $u_i^1$ in \eqref{eq:extortion_payoff}, can be other values as long as the strategy $\mathbf{p}$ has feasible solutions to satisfy the corresponding condition similar to \eqref{eq:extortion_strategy}. For example, in the two-player game scenario, the original extortion strategy was proposed by using the payoffs at all-defection state as the base values \cite{press2012iterated}, where the feasibility of the extortion strategy was analyzed accordingly; while in \cite{hao2018payoff}, the range of base values are demonstrated to be between the payoffs of all-defection and all-cooperation.

\section{Full Contribution Enforcement Based on CE}\label{sec:mechanism}
%怎么用剥削策略？首先推导出p
As mentioned earlier, the server can fulfill an extortionate relationship between the expected utilities of herself and those of all devices via elaborately setting a CE strategy. In this section, we further explore the potential  of this strategy in stimulating full cooperation of the devices so as to solve our problem defined in Section \ref{sec:formulation}.

\subsection{Feasibility of the CE Strategy}\label{subsec:feasibility}
According to \eqref{eq:extortion_strategy}, one can get the server's strategy $\mathbf{p}$ as
\begin{align*}
p_j=
\begin{cases}
\gamma ( u_s^j - u_s^1 - \chi \sum_{i=1}^n (u_i^j - u_i^1) ) + 1,  j = 1,\cdots,\frac{\eta}{2}, \\
\gamma ( u_s^j - u_s^1 - \chi \sum_{i=1}^n (u_i^j - u_i^1) ),  j = \frac{\eta}{2}+1, \cdots, \eta.
\end{cases}
\end{align*}
Given a certain $\chi \geq 1$, its feasibility is dependent on the utility vectors of the server and the devices. Denote $A_j = u_s^j - u_s^1$ and $B_j = \sum_{i=1}^n (u_i^j - u_i^1)$, $j \in \{ 1,\cdots,\eta \}$.  Considering that $\gamma \neq 0$, the constraints of the utility vectors vary in the following two cases:

\textit{Case 1:} $\gamma > 0$.
\begin{align*}
\begin{cases}
-\frac{1}{\gamma} \leq A_j - \chi B_j \leq 0,  j = 1,\cdots,\frac{\eta}{2}, \\
0 \leq A_j - \chi B_j \leq \frac{1}{\gamma}, j = \frac{\eta}{2}+1, \cdots, \eta.
\end{cases}
\end{align*}

\textit{Case 2:} $\gamma < 0$.
\begin{align*}
\begin{cases}
0 \leq A_j - \chi B_j \leq -\frac{1}{\gamma}, j = 1,\cdots,\frac{\eta}{2}, \\
\frac{1}{\gamma} \leq A_j - \chi B_j \leq 0, j = \frac{\eta}{2}+1, \cdots, \eta.
\end{cases}
\end{align*}

\subsection{Potential of the CE Strategy to Drive Devices' Cooperation}\label{subsec:potential}
%假设device是演化玩家，证明他们always倾向于合作
Under a feasible CE strategy $\mathbf{p}$ adopted by the server, one can analyze its potential of driving the devices to fully utilize their local datasets in FEL. To that aim, we first assume that each device in the FEL game searches for the best response strategy in an \textit{evolutionary} manner. The reason is that the device lacks the global game information compared to the server who can interact with all devices in the game. Here we assume that a device adjusts his strategy with the goal of improving his own utility regardless of the strategy or utility of the server. Inspired by \cite{smith1982evolution}, we define the following strategy evolving path for an evolutionary device\footnote{Here we discuss one of the devices as a representative and thus omit the subscript $i$ for brevity.} with $q^t \in [0,1]$ denoting his cooperation probability at round $t$,
 %which can be introduced in the following theorem.
 \begin{align}\label{eq:evolution}
 q^{t+1} = q^t \frac{W_C^t}{W^t},
 \end{align}
where %$q^t \in [0,1]$ is the cooperation probability of the device at game round $t$, 
$W_C^t$ refers to the expected utility of cooperation, and $W^t$ represents the total expected utility. With $W_D^t$ denoting the expected utility of defection, the total expected utility can be calculated by 
\begin{align}\label{eq:total_utility}
W^t = q^t W_C^t + (1-q^t) W_D^t.
\end{align}
Referring to the right side of \eqref{eq:evolution}, we can find that the numerator $q^t W_C^t$ is a part of the denominator, resulting in $ q^{t+1} \in [0,1]$.

To investigate whether the proposed CE strategy can drive the full cooperation of devices, we need to study the condition of $q^t$ increasing. 
According to \eqref{eq:evolution}, we can find that only when $W_C^t > W^t$ can the cooperation probability of the device increase in the next round. Combining it with \eqref{eq:total_utility}, we can derive the sufficient condition of the CE strategy being able to enforce the device become more cooperative as follows:
\begin{align*}
W_C^t > W^t & \Rightarrow W_C^t > q^t W_C^t + (1-q^t) W_D^t, \\
					& \Rightarrow W_C^t > W_D^t,
\end{align*}
for $q^t \neq 1$. In fact, in the case of $q^t =1$, there exists $W^t = W_C^t$ according to \eqref{eq:total_utility}  and thus $q^{t+1}$ is always 1, which never requires any function of the CE strategy. 

Therefore, in the following, we focus on solving the problem of \textit{when $q^t \in [0, 1)$, can the CE strategy function to elicit the cooperation from the device?} Referring to the above-derived sufficient condition, we can find that this problem turns to be \textit{whether the CE strategy can lead to $W_C^t > W_D^t$.}

Recalling the power of the CE strategy presented in Section \ref{sec:extortion}, the server's strategy works on the whole set of devices according to \eqref{eq:extortion_payoff} and \eqref{eq:extortion_strategy}. To study the effect of the CE strategy on any individual device, we consider two possible situations of devices in the FEL game:
\begin{enumerate}
\item[S1:] Devices are homogeneous using the same strategy and receiving the same utility;
\item[S2:] Devices are heterogeneous with various strategies and utilities.
\end{enumerate}

Then, for both situations, we can demonstrate that the device tends to cooperate under the server's CE strategy, which are respectively presented in the following theorems.

\begin{theorem}\label{thrm:cooperation_homo}
In the case of all devices with the same strategy and utility, the server utilizing the CE strategy can enforce any evolutionary device to obtain the cooperation probability $q^t\rightarrow 1$. 
%For an evolutionary device, when the server utilizes the CE strategy to enforce the relationship of expected utilities shown in \eqref{eq:extortion_payoff}, the cooperation probability of the device approaches 1.
\end{theorem}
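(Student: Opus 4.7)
The plan is to reduce the convergence claim $q^t \to 1$ to the pointwise inequality $W_C^t > W_D^t$ that was identified as sufficient in the discussion preceding the theorem. Once this holds for every $t$ with $q^t \in [0,1)$, equation~\eqref{eq:total_utility} gives $W^t = q^t W_C^t + (1-q^t)W_D^t < W_C^t$, so the ratio $W_C^t / W^t$ exceeds $1$ and the recursion \eqref{eq:evolution} produces a strictly increasing sequence bounded above by $1$. A monotone bounded sequence converges, and a short argument (either by contradiction against a limit $q^* < 1$, using continuity of $W_C^t - W_D^t$ in $q$, or by a uniform gap bound) then forces the limit to be $1$.

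My first substantive step would be to exploit homogeneity to simplify the CE identity. Since every device uses the same $q^t$ and experiences the same utility $E$, equation~\eqref{eq:extortion_payoff} collapses to $E_s - u_s^1 = \chi n (E - u^1)$, where I drop the device index. Next, I would use the additive structure $u_i = \alpha \psi(x) + \beta m(y)$ to split the conditional expectations as
\begin{align*}
W_C^t &= \alpha\,\mathbb{E}[\psi(x)\mid y = C],\\
W_D^t &= \alpha\,\mathbb{E}[\psi(x)\mid y = D] + \beta\lambda(1-\delta).
\end{align*}
Because $\psi$ takes only two values $\overline{\psi},\underline{\psi}$, the gap rewrites as
\[
W_C^t - W_D^t \;=\; \alpha(\overline{\psi}-\underline{\psi})\,\Delta \;-\; \beta\lambda(1-\delta),
\]
where $\Delta := \Pr(x = C \mid y_i = C) - \Pr(x = C \mid y_i = D)$ is the differential cooperation propensity of the server toward a representative device. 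By Theorem~\ref{thrm:condition} we already have $\alpha(\overline{\psi}-\underline{\psi}) > \beta\lambda(1-\delta)$, so everything reduces to proving a strict lower bound $\Delta > \beta\lambda(1-\delta)/[\alpha(\overline{\psi}-\underline{\psi})]$, which lies in $(0,1)$.

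To obtain that bound I would invoke the closed-form of $p_j$ from Section~\ref{subsec:feasibility}. For any pair of game states $g_j, g_{j'}$ that agree on the server's action and on the actions of all devices except the representative one (with $y_i = C$ in $g_j$ and $y_i = D$ in $g_{j'}$), the difference $p_j - p_{j'}$ equals $\gamma\big[(u_s^j - u_s^{j'}) - \chi(u_i^j - u_i^{j'}) - \chi(n-1)\,0\big]$, since the other devices' utilities are unchanged. The first bracket is $\alpha_s(\phi(\mathbf{y}^{(j)}) - \phi(\mathbf{y}^{(j')}))$, strictly positive by monotonicity of $\phi$ in the effective data size (equations~\eqref{eq:profit}--\eqref{eq:error}); the second bracket equals $\alpha(\overline{\psi}-\underline{\psi}) \mathbb{1}\{x=C\} - \beta\lambda(1-\delta)\cdot(-1)$, which is bounded. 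Choosing $\gamma$ in the feasible range from Section~\ref{subsec:feasibility} and a sufficiently large $\chi$ forces every such pairwise gap to exceed the required threshold, and averaging over the stationary distribution $\mathbf{v}$ yields the desired bound on $\Delta$ regardless of the particular $\mathbf{v}$.

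The main obstacle I anticipate is controlling $\Delta$ \emph{uniformly} in $t$: the stationary distribution $\mathbf{v}$ shifts as $q^t$ evolves, so a bound that relies on a specific $\mathbf{v}$ would give at best local monotonicity. The clean workaround is precisely the state-wise comparison sketched above: if every pair $(g_j,g_{j'})$ differing only in the representative device's action satisfies $p_j - p_{j'} \ge c > 0$ for some $c$ independent of $\mathbf{v}$, then the integrated $\Delta$ inherits the same lower bound under any stationary distribution. Verifying that the pairwise gap is truly bounded below by a constant depending only on the primitives $(\alpha,\alpha_s,\beta,\lambda,\delta,\overline{\phi}-\underline{\phi},\chi,\gamma,n)$ and not on the other devices' current $q^t$ is the delicate computation, and it is where feasibility of $\gamma$ from Section~\ref{subsec:feasibility} must be used carefully to guarantee the bound is nonvacuous.
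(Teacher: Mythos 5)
There is a genuine gap, and it sits exactly at the step where you abandon the CE identity. Your reduction to $W_C^t>W_D^t$ via \eqref{eq:evolution}--\eqref{eq:total_utility} matches the paper, and so does the collapse of \eqref{eq:extortion_payoff} to $E_s-u_s^1=n\chi(E-u^1)$ under homogeneity. But the paper then \emph{uses} that identity as the engine of the proof: it writes $E_i=\frac{1}{n\chi}(E_s-u_s^1)+u_i^1$ and lets the device's own action enter through the server's profit $\phi(\mathbf{y})$, so that switching the device from $C$ to $D$ drops $E_s$ by $\alpha_s(\overline{\phi}-\underline{\phi})$ and hence drops the CE-controlled payoff $E_i$ by $\frac{\alpha_s(\overline{\phi}-\underline{\phi})}{n\chi}>0$ for \emph{every} feasible $\gamma$ and every $\chi\ge 1$. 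Your proposal instead decomposes the raw utility $u_i=\alpha\psi(x)+\beta m(y)$ and routes the entire comparison through $\Delta=\Pr(x=C\mid y_i=C)-\Pr(x=C\mid y_i=D)$. In the paper's model this same-round correlation is identically zero: the server's action at round $t$ is drawn from $\mathbf{p}$ conditioned on the round-$(t-1)$ outcome, while the evolutionary device's action is Bernoulli$(q^t)$ independent of history, so $x^t$ and $y_i^t$ are independent and your formula gives $W_C^t-W_D^t=-\beta\lambda(1-\delta)<0$ --- the opposite of what is needed (this is just Theorem \ref{thrm:all_d} resurfacing). To make $\Delta>0$ you would have to reinterpret $W_C^t$ as including the \emph{next-round} payoff induced by the server's reaction $p_j$ versus $p_{j'}$, which is a different object from the one appearing in \eqref{eq:total_utility} and would need its own justification.

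Even granting that reinterpretation, the decisive inequality $\Delta>\beta\lambda(1-\delta)/[\alpha(\overline{\psi}-\underline{\psi})]$ is precisely the part you leave open, and it is not a free move: the pairwise gap $p_j-p_{j'}=\gamma\bigl[\alpha_s(\phi^{(j)}-\phi^{(j')})+\chi\beta\lambda(1-\delta)\bigr]$ cannot be inflated by ``a sufficiently large $\chi$'' without colliding with the feasibility constraints of Section \ref{subsec:feasibility}, which force every $p_j$ into $[0,1]$ and couple the admissible $\gamma$ to $\chi$. At best this line would establish the theorem for specially tuned $(\gamma,\chi)$, whereas the statement (and the paper's proof) holds for any feasible CE strategy with $\chi\ge 1$. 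The fix is to return to the CE identity: compute $E_i(x\mid y_i)$ by substituting the four server payoffs $u_s^1=\alpha_s\overline{\phi}-\beta_s\rho$, $u_s^{\frac{\eta}{2}+1}=\alpha_s\overline{\phi}$, $u_s^{\frac{\eta}{2}}=\alpha_s\underline{\phi}-\beta_s\rho$, $u_s^{\eta}=\alpha_s\underline{\phi}$ into $E_i=\frac{1}{n\chi}(E_s-u_s^1)+u_i^1$, and conclude $W_C^t>W_D^t$ directly from $\underline{\phi}<\overline{\phi}$, with no correlation bound required.
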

\begin{proof}

For situation S1 where all devices involved in FEL are homogeneous, 
%\footnote{The case of devices with heterogeneous strategy and utility definitions can be discussed in a similar way. Specifically, to study the action of any specific device, other devices' actions are assumed to be fixed. Thus, $E_i$ can be derived with others' fixed expected utilities and then analyzed similarly.} 
since everyone uses the same strategy and the server exerts one uniform strategy to all of them as well, we study the cooperation probability of any device here as an representative. 
%as a representative without loss of generality. To focus on the action of any specific device, we assume that the action of all other devices are fixed, leading to their fixed expected utilities as well, and thus we denote $H=\sum_{i=2}^n (E_i - u_i^1)$. 
 %Without loss of generality, we use device 1 as an example to be this specific device and denote $H=\sum_{i=2}^n (E_i - u_i^1)$. 
According to \eqref{eq:extortion_payoff}, we can derive the expected utility of the device as
\begin{align}\label{eq:extor_proof}
E_i = \frac{1}{n \chi} (E_s - u_s^1) + u_i^1 .
\end{align}

Next, we analyze the expected utilities of the evolutionary device with different actions, i.e., $W_C^t$ and $W_D^t$. More specifically, when the device takes the cooperation action, the server's expected utility $E_s$ depends on her own action, where $C$ leads to $u_s^1 = \alpha_s \overline{\phi} - \beta_s \rho$ while $D$ results in $ u_s^{\frac{\eta}{2}+1} = \alpha_s \overline{\phi} $ according to \eqref{eq:server_utility}. 
Based on the above equation \eqref{eq:extor_proof}, one can find that the server's expected utility brings two possible payoffs for the device, which are 
\begin{align*}
E_i ({x=C\mid y_i = C}) & = u_i^1, \\
E_i ({x=D\mid y_i = C}) & = \frac{1}{n \chi} (u_s^{\frac{\eta}{2}+1}  - u_s^1) + u_i^1 \\
& = \frac{1}{n \chi} \beta_s \rho + u_i^1. 
\end{align*}
Assuming that the cooperation of the server at round $t$ is $p^t$, the expected cooperation payoff of the device can be calculated by
\begin{align}\label{eq:wc}
W_C^t =  & p^t E_i ({x=C \mid y_i = C}) \notag \\
& + (1-p^t) E_i ({x=D \mid y_i = C}). 
\end{align}

While the device chooses the action $D$, the expected utility of the server $E_s$ would become $u_s^{\frac{\eta}{2}} = \alpha_s \underline{\phi} - \beta_s \rho$ and $u_s^\eta = \alpha_s \underline{\phi} $ for $x=C$ and $D$, respectively. And according to \eqref{eq:extor_proof}, the device's payoff can be
\begin{align*}
E_i ({x=C\mid y_i = D}) & = \frac{1}{n \chi} (u_s^{\frac{\eta}{2}} - u_s^1) + u_i^1 \\
& = \frac{1}{n \chi} ( \alpha_s \underline{\phi} - \alpha_s \overline{\phi} ) + u_i^1, \\
E_i ({x=D\mid y_i = D}) & = \frac{1}{n \chi} (u_s^\eta  - u_s^1) + u_i^1 \\
& = \frac{1}{n \chi} ( \alpha_s \underline{\phi} - \alpha_s \overline{\phi} + \beta_s \rho) + u_i^1. 
\end{align*}
Thus, the expected defection payoff of the device turns to be
\begin{align}\label{eq:wd}
W_D^t = & p^t E_i ({x=C\mid y_i = D}) \notag \\
& + (1-p^t) E_i ({x=D\mid y_i = D}). 
\end{align}
%& = p^t (\frac{1}{n \chi} ( \alpha_s \underline{\phi} - \alpha_s \overline{\phi} ) + u_i^1) + (1-p^t) (\frac{1}{n \chi} ( \alpha_s \underline{\phi} - \alpha_s \overline{\phi} + \beta_s \rho) + u_i^1).

Since $\underline{\phi} < \overline{\phi} $, there clearly exists $E_i ({x=C\mid y_i = D}) < E_i ({x=C \mid y_i = C}) $ and $E_i ({x=D\mid y_i = D}) < E_i ({x=D \mid y_i = C})$, which leads to $W_C^t > W_D^t$ by
comparing \eqref{eq:wc} and \eqref{eq:wd}, thus concluding the proof of the theorem.
\end{proof}

\begin{theorem}\label{thrm:cooperation_heter}
In the case of all devices with different strategies and utilities, the server's CE strategy can drive an evolutionary device to get $q^t\rightarrow 1$. 
\end{theorem}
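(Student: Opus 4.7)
The plan is to follow the structure of the homogeneous-case proof of Theorem \ref{thrm:cooperation_homo} while accommodating the fact that each device now has its own stationary expected utility. Fix an evolutionary device $d_k$ and treat the other devices' mixed strategies as arbitrary but held fixed. Because \eqref{eq:extortion_payoff} is additive in the $E_i$'s, I would first rearrange it to isolate $d_k$:
\begin{align*}
E_k = \frac{1}{\chi}(E_s - u_s^1) - \sum_{j\neq k}(E_j - u_j^1) + u_k^1,
\end{align*}
which collapses to the expression used in Theorem \ref{thrm:cooperation_homo} when all devices are identical.

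Next, I would replicate the four-way case analysis of Theorem \ref{thrm:cooperation_homo}: compute the conditional expectations $E_k(x,y_k)$ by substituting, on each conditioning event, the value of $u_s$ obtained from \eqref{eq:server_utility} into the display above. The complication relative to the homogeneous case is that $u_s$ depends on the entire action vector $\mathbf{y}$ through $\phi$, not merely on $x$ and $y_k$. Conditional on $y_k$, however, the expectation of $\phi(\mathbf{y})$ over the remaining coordinates is a well-defined quantity $\bar{\phi}(y_k)$, and the monotone forms in \eqref{eq:profit}--\eqref{eq:error} guarantee $\bar{\phi}(C) > \bar{\phi}(D)$, since $d_k$ switching to $C$ strictly enlarges the effective training set and hence shrinks $\varepsilon$. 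Substituting these conditional values into \eqref{eq:wc}--\eqref{eq:wd} and forming $W_C^t - W_D^t$, the dominant surviving term is the strictly positive $\frac{\alpha_s}{\chi}\big(\bar{\phi}(C)-\bar{\phi}(D)\big)$, mirroring the $\frac{\alpha_s(\overline{\phi}-\underline{\phi})}{n\chi}$ contribution in the homogeneous proof.

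The main obstacle is controlling the residual $\sum_{j\neq k}(E_j^C - E_j^D)$ introduced by heterogeneity, where superscripts denote conditioning on $y_k$. Since each $u_j = \alpha_j\psi_j(x) + \beta_j m_j(y_j)$ depends on $d_k$ only through the shared server action $x$, this residual factors as $(\mu^C - \mu^D)\sum_{j\neq k}\alpha_j(\overline{\psi_j} - \underline{\psi_j})$, where $\mu^{y_k} := P(x=C\mid y_k)$ is the stationary conditional cooperation probability of the server. I would bound this residual using the feasibility bounds on $\gamma$ from Section \ref{subsec:feasibility}, which constrain how sensitive the server's conditional response can be to any single device's action, together with the utility-ordering condition $\alpha_j(\overline{\psi_j}-\underline{\psi_j}) > \beta_j\lambda_j(1-\delta_j)$ from Theorem \ref{thrm:condition}. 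Combining these, the residual cannot flip the sign of the dominant term, so $W_C^t > W_D^t$ whenever $q^t \in [0,1)$.

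Finally, feeding $W_C^t > W_D^t$ back through \eqref{eq:total_utility} gives $W_C^t > W^t$, and hence $q^{t+1} > q^t$ via \eqref{eq:evolution}. Since $(q^t)$ is monotone increasing and bounded above by $1$, it must converge, and the same strict inequality rules out any fixed point in $[0,1)$, so $q^t \to 1$ as claimed.
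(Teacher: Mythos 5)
Your overall route is the same as the paper's: isolate the target device's expected utility from the collective relation \eqref{eq:extortion_payoff}, hold the other devices' strategies fixed, run the four-way conditional-payoff comparison, and conclude $W_C^t>W_D^t$ from $\underline{\phi}<\overline{\phi}$. Your rearrangement of \eqref{eq:extortion_payoff} is exactly the paper's, which writes the cross term as $\Delta_{-i}=\chi\sum_{j\neq i}(E_j-u_j^1)$ and gets $E_i=\frac{1}{\chi}(E_s-u_s^1-\Delta_{-i})+u_i^1$. Your observation that $\phi$ depends on the whole action vector, so that one should really compare conditional averages $\bar{\phi}(C)>\bar{\phi}(D)$ rather than the extreme values $\overline{\phi},\underline{\phi}$, is if anything more careful than the paper, which simply substitutes the extremes.

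The gap is in how you handle the residual $\sum_{j\neq k}(E_j^C-E_j^D)$. The paper never has to bound this term: it fixes the other devices' strategies, notes that their stationary expected utilities $E_j$ are then fixed numbers, and so the \emph{same} constant $\Delta_{-i}$ appears in all four conditional payoffs and cancels identically when forming $W_C^t-W_D^t$. You instead allow $\Delta_{-k}$ to depend on the conditioning event $y_k$ and then assert that the resulting residual ``cannot flip the sign of the dominant term,'' citing the feasibility constraints of Section \ref{subsec:feasibility} and the condition of Theorem \ref{thrm:condition}. Neither tool delivers that bound: Theorem \ref{thrm:condition} gives a \emph{lower} bound on $\alpha_j(\overline{\psi_j}-\underline{\psi_j})$, which makes your residual $(\mu^C-\mu^D)\sum_{j\neq k}\alpha_j(\overline{\psi_j}-\underline{\psi_j})$ potentially larger, not smaller; and the constraints in Section \ref{subsec:feasibility} restrict the utility vectors relative to $\gamma$ and $\chi$, not the stationary conditional probabilities $\mu^{y_k}=P(x=C\mid y_k)$, about which they say nothing. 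As written, the sign of $W_C^t-W_D^t$ is therefore not established. The fix is simply to adopt the paper's convention from the outset: with the other devices' strategies (hence their $E_j$) held fixed, treat $\Delta_{-k}$ as a single constant common to all four conditional expectations, after which it cancels and the comparison reduces to $\underline{\phi}<\overline{\phi}$ exactly as in Theorem \ref{thrm:cooperation_homo}.
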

\begin{proof}
Given the heterogeneous devices in situation S2, to focus on the behavior of (any) one specific device $d_i$, we assume that the strategies of other devices are given fixed, and thus their expected utilities are also certain values. To comply with \eqref{eq:extortion_payoff}, we denote 
$$\Delta_{-i} = \chi \sum_{j\in \mathcal{N} \setminus \{i\}} (E_j - u_j^1).$$
Then the expected utility of $d_i$ in this case turns to be
$$ E_i =  \frac{1}{\chi} (E_s-u_s^1-\Delta_{-i}) + u_i^1.$$

Similar to the proof of Theorem \ref{thrm:cooperation_homo}, we  can calculate $W_C^t$ according to $W_C^t =  p^t E_i ({x=C \mid y_i = C}) + (1-p^t) E_i ({x=D \mid y_i = C})$, where 
\begin{align*}
E_i ({x=C\mid y_i = C}) & = \frac{\Delta_{-i}}{\chi}  + u_i^1, \\
E_i ({x=D\mid y_i = C}) & = \frac{1}{\chi} (\beta_s \rho -\Delta_{-i}) + u_i^1. 
\end{align*}

For the calculation of $W_D^t = p^t E_i ({x=C\mid y_i = D}) + (1-p^t) E_i ({x=D\mid y_i = D})$, we have 
\begin{align*}
E_i ({x=C\mid y_i = D}) & = \frac{1}{\chi} ( \alpha_s \underline{\phi} - \alpha_s \overline{\phi} -\Delta_{-i}) + u_i^1, \\
E_i ({x=D\mid y_i = D}) & = \frac{1}{\chi} ( \alpha_s \underline{\phi} - \alpha_s \overline{\phi} + \beta_s \rho -\Delta_{-i}) + u_i^1. 
\end{align*}

Due to the same reason of $\underline{\phi} < \overline{\phi} $, we can obtain $E_i ({x=C\mid y_i = D}) < E_i ({x=C \mid y_i = C}) $ and $E_i ({x=D\mid y_i = D}) < E_i ({x=D \mid y_i = C})$ in this situation as well, resulting in $W_C^t > W_D^t$, which can lead to the gradual increase of $q^t$ until approaching to 1.
\end{proof}

From the above two theorems, we can tell that the CE strategy can theoretically incentivize the final cooperation of any device involving in the FEL game with an evolutionary mindset no matter in the homogeneous or heterogeneous device settings. In other words, devices can usually be driven to participate in the FEL process with fully using their local datasets and contributing to the global learning without any reservation. %The most critical concern in FEL can be thus resolved.

\subsection{Fairness of the CE Strategy}\label{subsec:fairness}

Given the vigorous force of the CE strategy in stimulating devices' collaboration, one may concern about \textit{what if the server behaves defectively via not returning the final well-trained model to the devices so as to save sending cost for obtaining a better utility?} 
%whether the server can behave defectively via not returning the final well-trained model to the devices so as to save sending cost for obtaining a better utility. 
This question will be investigated in detail as follows.

According to Theorems \ref{thrm:cooperation_homo} and \ref{thrm:cooperation_heter}, the final actions of the devices become cooperation as the number of game rounds increases. Nevertheless, the server can still select her action from $C$ and $D$. However, according to the following theorem, one can see that the best action for the server  with the CE strategy to keep the long-term stability is to eventually choose $C$.

\begin{theorem}
The final action of the server adopting the CE strategy is cooperation.
\end{theorem}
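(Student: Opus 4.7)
The plan is to combine the CE identity \eqref{eq:extortion_payoff} with the asymptotic cooperation of the devices established in Theorems \ref{thrm:cooperation_homo} and \ref{thrm:cooperation_heter}, and then read off the only value of the server's long-run cooperation probability compatible with that identity. So first I would invoke those two theorems to conclude that, after sufficiently many rounds, every device's cooperation probability $q^t$ tends to $1$. Consequently the stable-state distribution $\mathbf{v}$ of the Markov chain concentrates on the two game results in which all devices cooperate, namely $g_1 = CC\cdots C$ (server also cooperates) and $g_{\eta/2+1} = DCC\cdots C$ (server defects, devices cooperate). Let $p^{*}$ denote the server's limiting cooperation probability on this two-state support.

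Second, I would evaluate each side of \eqref{eq:extortion_payoff} in this limit. Using \eqref{eq:server_utility}, the relevant utilities of the server are $u_s^1 = \alpha_s \overline{\phi} - \beta_s \rho$ and $u_s^{\eta/2+1} = \alpha_s \overline{\phi}$, so that
\begin{align*}
E_s - u_s^1 \;=\; (1-p^{*}) \bigl( u_s^{\eta/2+1} - u_s^{1} \bigr) \;=\; (1-p^{*}) \beta_s \rho \;\geq\; 0.
\end{align*}
Using \eqref{eq:device_utility} and $m_i(C)=0$, the corresponding payoffs of device $d_i$ are $u_i^1 = \alpha_i \overline{\psi_i}$ and $u_i^{\eta/2+1} = \alpha_i \underline{\psi_i}$, hence
\begin{align*}
\chi \sum_{i=1}^{n} (E_i - u_i^1) \;=\; -\,\chi (1-p^{*}) \sum_{i=1}^{n} \alpha_i \bigl( \overline{\psi_i} - \underline{\psi_i} \bigr) \;\leq\; 0,
\end{align*}
where the inequality comes from $\overline{\psi_i} > \underline{\psi_i}$ and $\chi \geq 1$.

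Third, the CE identity forces the two displayed expressions to be equal. Since one is non-negative and the other is non-positive, equality can hold only when both vanish, and because $\beta_s \rho > 0$ and $\sum_i \alpha_i(\overline{\psi_i}-\underline{\psi_i}) > 0$ by Theorem \ref{thrm:condition}, this requires $1 - p^{*} = 0$, i.e. $p^{*} = 1$. This means the server's limiting action is cooperation, proving the claim.

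The main obstacle I anticipate is making the passage to the limit rigorous: the CE identity is a stable-state identity for a fixed Markov transition matrix $\mathbf{M}$, whereas Theorems \ref{thrm:cooperation_homo}--\ref{thrm:cooperation_heter} describe an evolutionary update of the devices' strategies, so strictly speaking $\mathbf{M}$ is changing between rounds. To close this gap I would argue that, since the CE constraint \eqref{eq:extortion_strategy} uniquely fixes the relationship between $E_s$ and $\{E_i\}$ in every stationary regime the chain reaches, the same linear relation must continue to hold along the sequence of stationary distributions induced by $q^t \to 1$; taking the limit inside the linear identity is then legitimate and yields the two-state reduction above. Everything else is a routine sign comparison.
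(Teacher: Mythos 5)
Your proposal is correct, and it shares the paper's skeleton --- invoke Theorems \ref{thrm:cooperation_homo} and \ref{thrm:cooperation_heter} to restrict attention to the two all-devices-cooperate outcomes $g_1$ and $g_{\eta/2+1}$, identify $u_s^1=\alpha_s\overline{\phi}-\beta_s\rho$, $u_s^{\eta/2+1}=\alpha_s\overline{\phi}$, $u_i^1=\alpha_i\overline{\psi_i}$, $u_i^{\eta/2+1}=\alpha_i\underline{\psi_i}$, and feed them into the CE identity \eqref{eq:extortion_payoff} --- but it closes the argument by a genuinely different move. The paper's proof is decision-theoretic: it observes that persistent defection makes the right side of \eqref{eq:extortion_payoff} negative (since $\underline{\psi_i}<\overline{\psi_i}$), hence $E_s<u_s^1$, which a ``reasonable server'' will not accept, so she cooperates. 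Your proof is a consistency argument: writing both sides in terms of the server's limiting cooperation probability $p^*$, the left side equals $(1-p^*)\beta_s\rho\geq 0$ while the right side equals $-\chi(1-p^*)\sum_i\alpha_i(\overline{\psi_i}-\underline{\psi_i})\leq 0$, so the identity itself forces $p^*=1$. This buys something the paper's version does not: it derives full cooperation as a necessary consequence of the CE constraint plus the devices' limiting behavior, with no residual appeal to the server's rationality, and it is consistent with the explicit feasibility formulas in Section \ref{subsec:feasibility} (which give $p_1=1$, making $g_1$ absorbing). The price is the limit-interchange subtlety you flag at the end --- the stationary identity is being applied along a sequence of evolving transition matrices --- which the paper's looser ``in the long run'' phrasing sidesteps; your continuity argument for passing the linear identity to the limit is the right way to patch this, and at the level of rigor of the rest of the paper it is adequate.
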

\begin{proof}
After enough number of FEL game rounds, devices choose cooperation eventually. Then the server's cooperation can bring the cooperative device $d_i$ the utility $u_i^1 = \alpha_i \overline{\psi_i}$ with the game result $g_1=CC\cdots C$, and her defection action can make the cooperative device obtain the utility $u_i^{\frac{\eta}{2}+1} = \alpha_i \underline{\psi_i}$ at the game result $g_{\frac{\eta}{2}+1}=DC\cdots C$. 

Referring to \eqref{eq:extortion_payoff}, one can find that the cooperative server forming the game state $g_1$ can still make it hold stably since the right side turns out to be zero with $E_i = u_i^1$ in the long run. However, if the server chooses to be defective constantly, the right side of \eqref{eq:extortion_payoff} would become negative because the device's utility in this case is $E_i = u_i^{\frac{\eta}{2}+1}$, which is less than $u_i^1$ due to 
$\underline{\psi_i} < \overline{\psi_i}$, %leads to any device's utility $ u_i^{\frac{\eta}{2}+1}  < u_i^1$, 
and thus there exists $E_s < u_s^1$. This is clearly unfavorable for a reasonable server. Thus, the best action for the server is also cooperation in the long run.
\end{proof}

Based on the above theorem, we can conclude that our proposed CE strategy employed by the server is fair for all players, which would result in all cooperation and bring the same-level utility to the server and devices.

\section{Experimental Evaluation}\label{sec:evaluation}
In this section, we conduct a series of experiments to demonstrate the effectiveness of the proposed CE strategy in eliciting full cooperation from all devices in the FEL game and other attractive features mentioned in the previous section. The machine used for simulation experiments is a desktop computer with a 3.59 GHz 6-Core processor and 16 GB memory. %, and languages include Python 3.6 for ML related procedures and Matlab R2020a for the rest. 
In all experiments, we fix the number of devices $n=8$. %, and the total number of game rounds is 500. % as an example. 
Scalar parameters for devices are randomly set following uniform distributions with $\alpha_i \in [0, 3],\beta_i\in[0,2],\psi_i(C)\in(1,2],\psi_i(D)\in[0,1],m_i(C)=0$, and $m_i(D)\in(0,1]$. 

While for the server,  the parameter values independent of the ML model are firstly set as $\alpha_s=5,\beta_s=2,b_i(C)=1,b_i(D)=0$. 
To appropriately set the parameters related to the profit function $\phi(\mathbf{y})$ which is closely depending on the specific ML task, we utilize the  MNIST database \cite{lecun1998gradient} using 6,000 data samples to train a 2-layer CNN classifier, where each device is assumed to generate $F_i=$750 samples in non-iid %(independent and identically distributed) 
manner. The obtained fitting parameters in \eqref{eq:error} are $k=13.2$ and $a=0.7$ with 95\% confidence, and $\delta_i=0.018$. Further, we fix $w=r=10,t=5$ in \eqref{eq:profit} and obtain the extreme values of $\phi(\mathbf{y})$ as $\overline{\phi}=10$ and $\underline{\phi}=5$. 
Note that we also test other sets of parameter values satisfying the requirements shown in Theorem \ref{thrm:condition} and Section \ref{subsec:feasibility}, but we obtain similar results which are omitted for brevity. Besides, each experiment is repeated 20 times to obtain the average for statistical confidence.

\subsection{Effectiveness of the CE strategy to enforce full cooperation}

To figure out whether the proposed CE strategy adopted by the server can enforce full cooperation from any evolutionary device as theoretically proved in Section \ref{subsec:potential}, we compare it with four classical strategies, namely ALLC (all cooperation), ALLD (all defection), TFT (tit-for-tat), and WSLS (win-stay-lose-shift). The first two strategies are easy to understand where the server stays constantly cooperative or defective. The TFT strategy means that the server behaves according to the device's previous action while in WSLS the server keeps on choosing an action if it brings a high utility and switches to the other action otherwise. 

Taking the first device as an example, we report the comparative experiment results in Fig. \ref{fig:ex1}, where his initial cooperation probability varies as $q^0=0.10,0.40,0.60, 0.90$ to indicate the robustness of our proposed CE strategy. It is clear that no matter how cooperative the device is at the beginning, the server adopting the CE strategy can elicit the final cooperation of the evolutionary device. As $q^0$ increases, the time consumption to achieve the stable state is less. This is because the more cooperative the device, the easier to drive his full cooperation.
It is clear that other strategies cannot achieve this goal as all of them lead to the cooperation probability approaching zero finally.

\begin{figure}[htbp]
\centering
\subfigure[$q^0=0.10$.]{
\label{fig:ex1_1}
\includegraphics[width=0.23\textwidth]{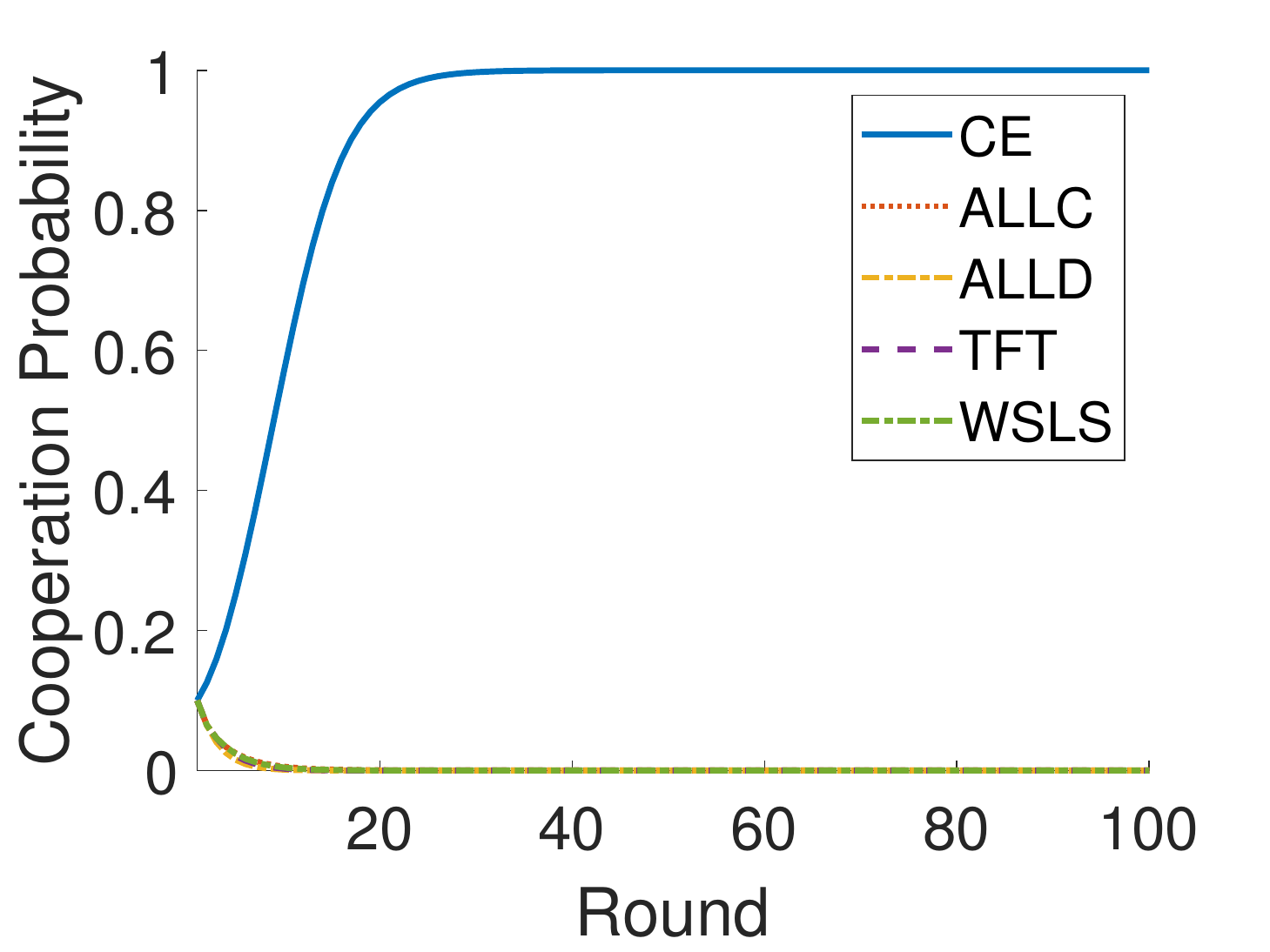}}
\subfigure[$q^0=0.40$.]{
\label{fig:ex1_4}
\includegraphics[width=0.23\textwidth]{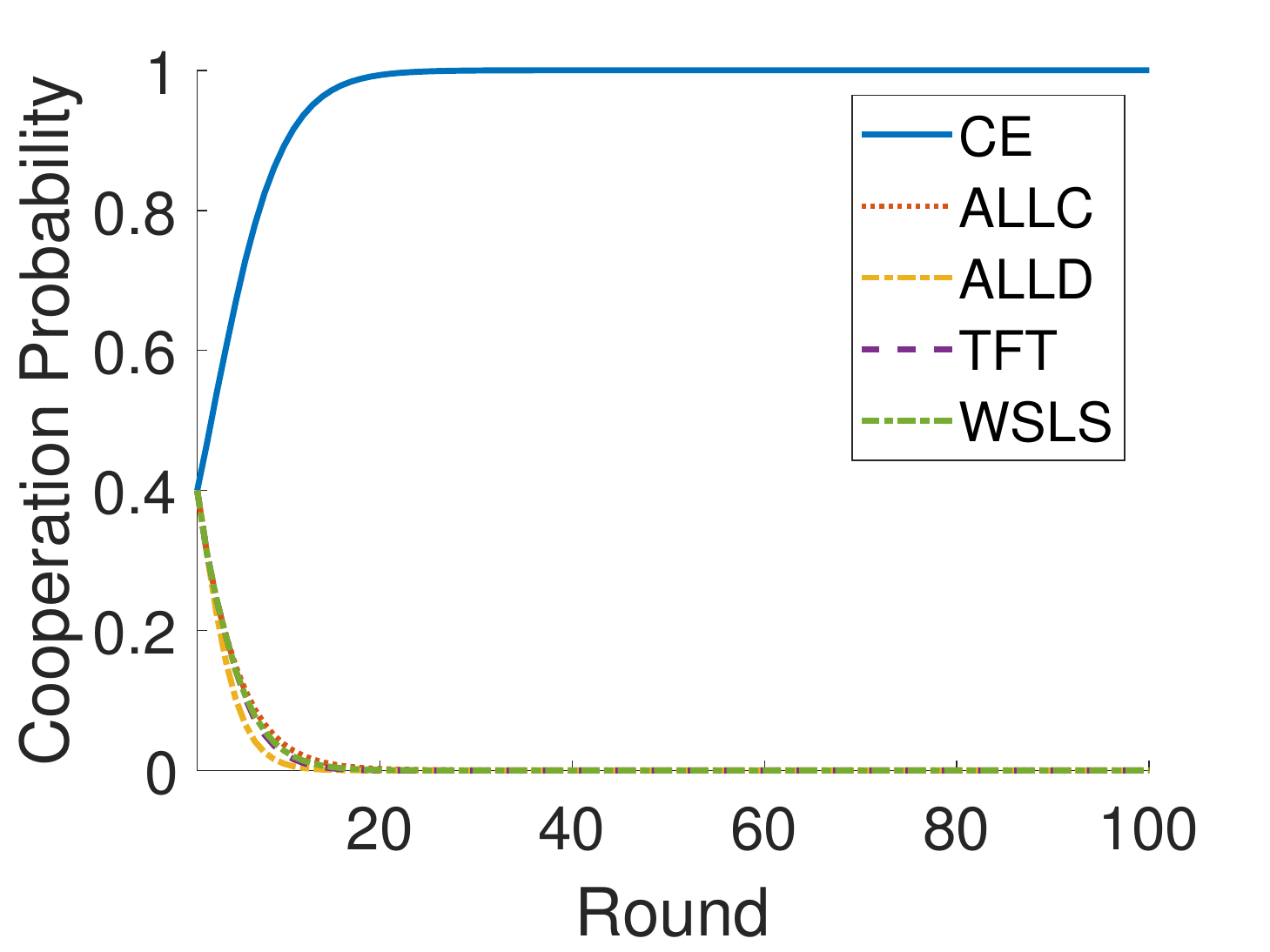}}
\subfigure[$q^0=0.60$.]{
\label{fig:ex1_6}
\includegraphics[width=0.23\textwidth]{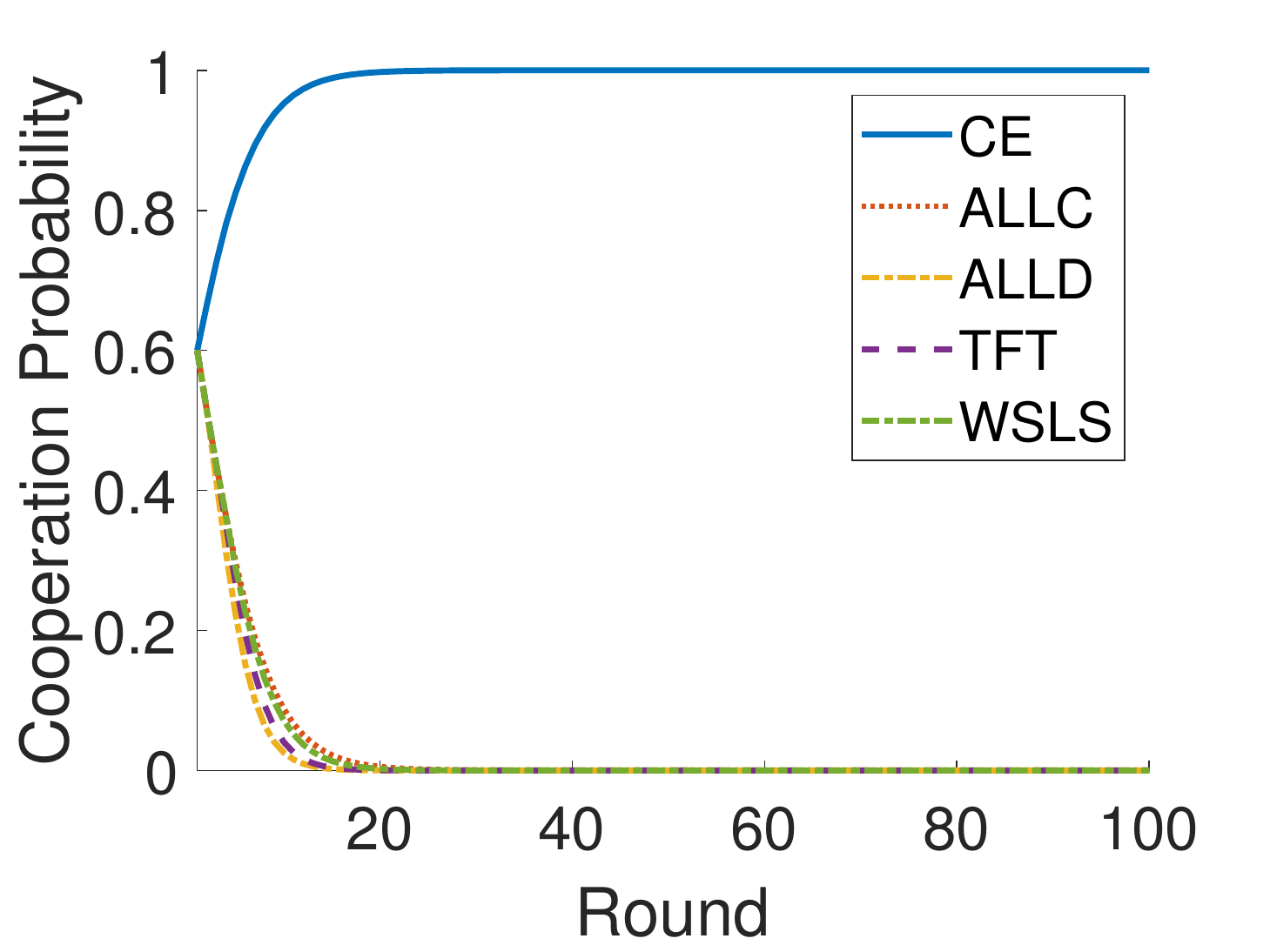}}
\subfigure[$q^0=0.90$.]{
\label{fig:ex1_9}
\includegraphics[width=0.23\textwidth]{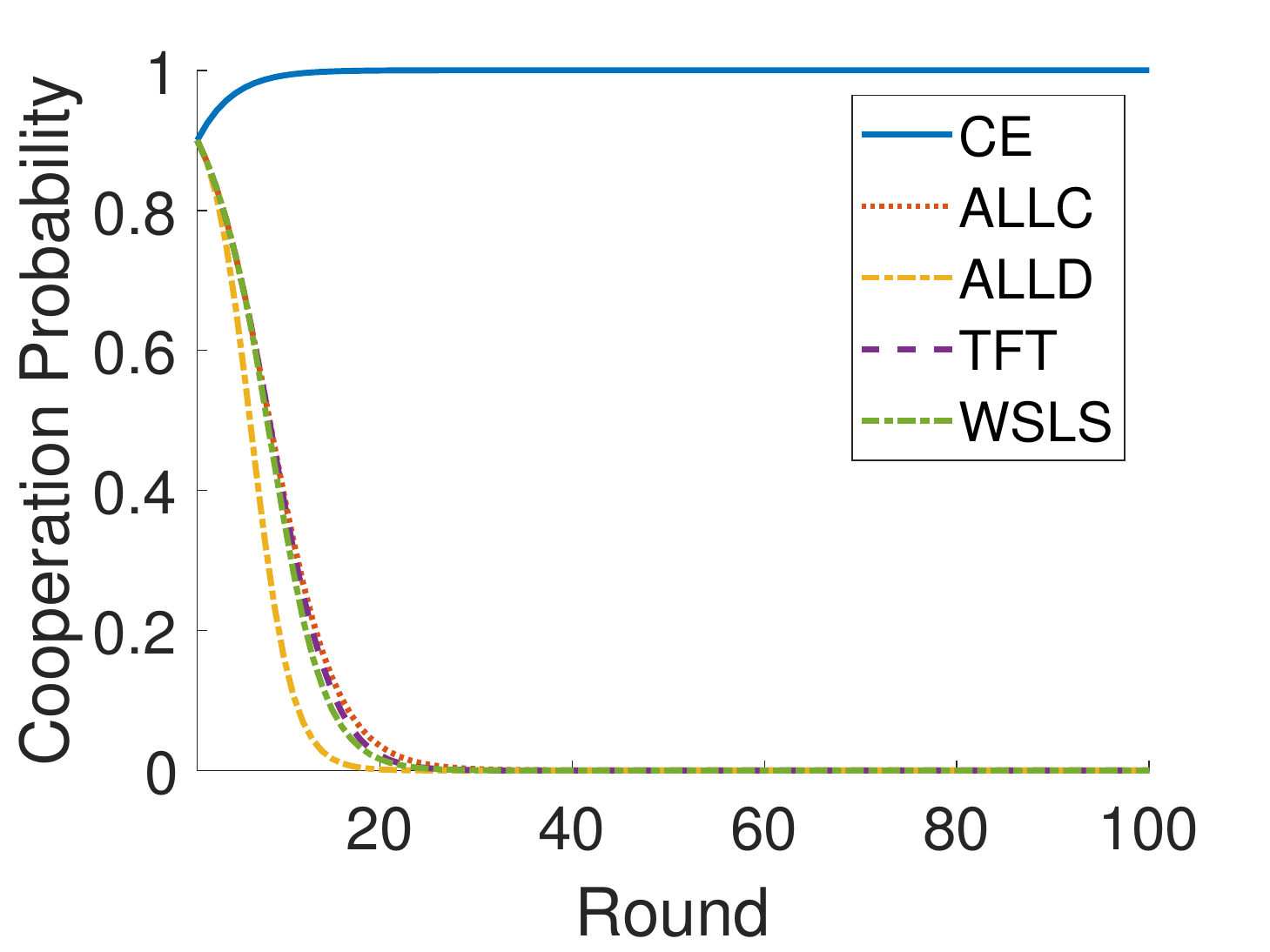}}
\caption{Cooperation probability dynamics of the evolutionary device given different strategies adopted by the server.}
\label{fig:ex1}
\end{figure}

\subsection{Fairness of the CE strategy}
Next, we explore whether the CE strategy is fair for both the server and the devices. We compare their utilities at the stable state in five cases where the server adopts different strategies. Specifically, we set the initial cooperation probability of a device as $q^0=0.4$ in this experiment and present the experimental results in Fig. \ref{fig:ex2}. It is worth noting that since the utility of the server and that of the device are different in values according to the definitions in \eqref{eq:device_utility} and \eqref{eq:server_utility}, we utilize a metric termed \textit{relative utility}, which is calculated by the ratio of the actual utility to the utility at the all-cooperation state, to study the fairness of each strategy. 

According to Fig. \ref{fig:ex2}, one can find that only the proposed CE strategy can achieve almost the same relative utility level for both the server and the device, which approximately equals 1, indicating that both of them obtain the stable utility with the value equivalent to the utility when all cooperate, i.e., $u_s^1$ and $u_i^1$. This clearly demonstrates the fairness of the CE strategy in incentivizing the full cooperation of all devices. For other cases, one can find that the ALLC strategy makes the server suffer from a severe loss since the evolutionary device can strategically exploit her friendliness and behave defectively to obtain a higher utility. The ALLD and TFT strategies lead to similar results for them where the server gains slightly less than the device. This is because the server cannot be fully exploited with ALLD and TFT strategies but the device in these cases will not be driven to cooperate, and thus both of them obtain less profit compared to the situation where the server adopts the CE strategy. The WSLS strategy also makes the server acquire less but performs better than the case of ALLC.

\begin{figure}[htbp]
\centering
\includegraphics[width=0.35\textwidth]{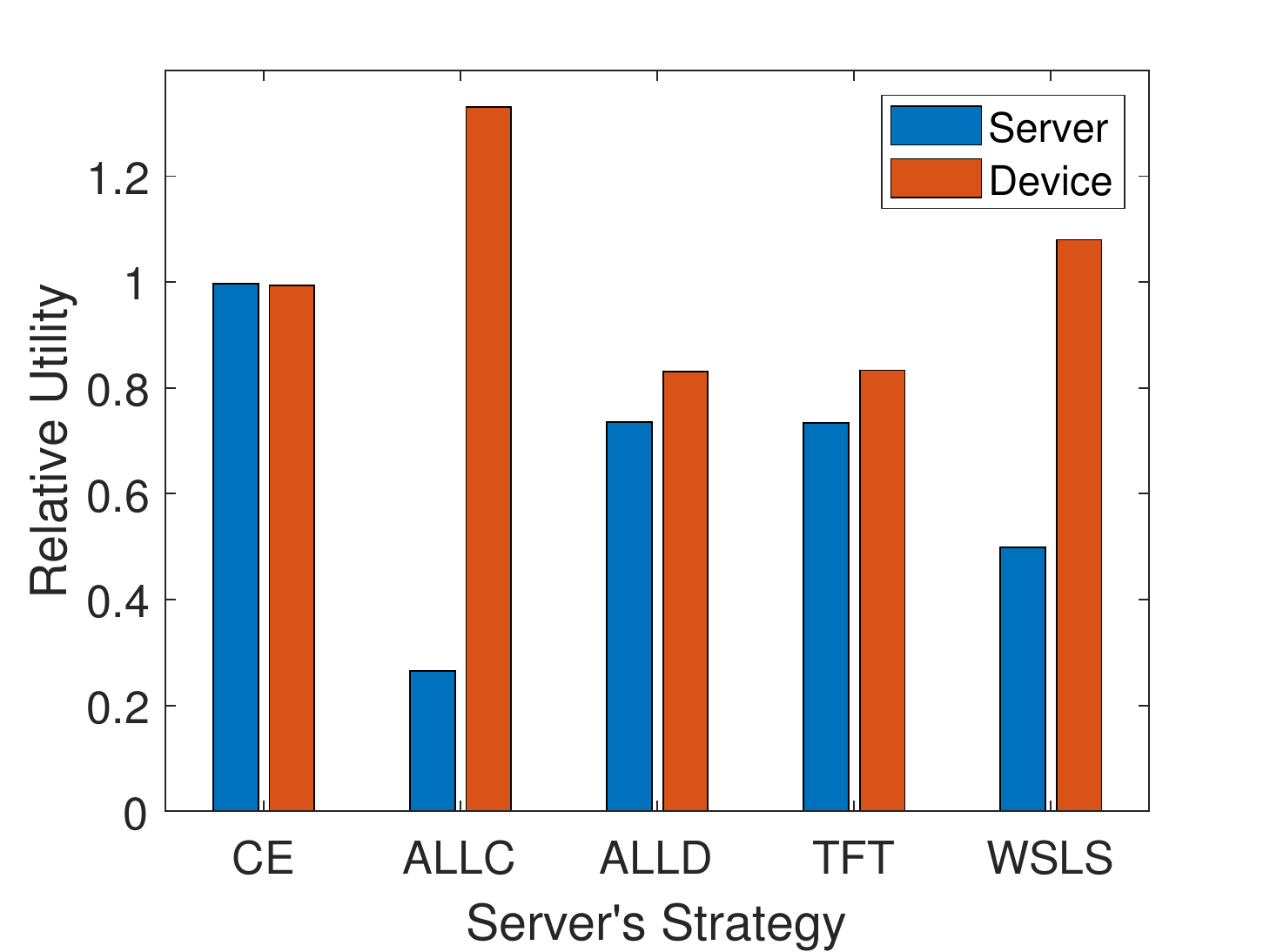}
\caption{Stable relative utilities of the server and the device given different strategies adopted by the server.}
\label{fig:ex2}
\end{figure}

%\subsection{Impact of the initial cooperation probability on utility}
Knowing that the CE strategy can lead to full cooperation of any evolutionary device and achieve almost the same level of stable utilities for both sides, we continue to investigate the dynamics of utility changing with time. In Figs. \ref{fig:ex3_stable} and \ref{fig:ex3_dynamic}, we first plot both utilities at the stable state  with four initial device cooperation probabilities, and then depict the dynamic change of the utilities in each round with each reflecting one case of $q^0$. It can be observed that $q^0$ brings no difference to the stable utilities as shown in the bar graph, while the dynamics of utilities varies according to the device's initial cooperation probability. More specifically, with the increase of $q^0$, the utilities of both sides can converge faster. In other words, the more cooperative the devices at the beginning, the quicker they can reach the stable state, which is coincident with the results of cooperation probability's evolution presented in Fig. \ref{fig:ex1}.

\begin{figure}[htbp]
\centering
\includegraphics[width=0.35\textwidth]{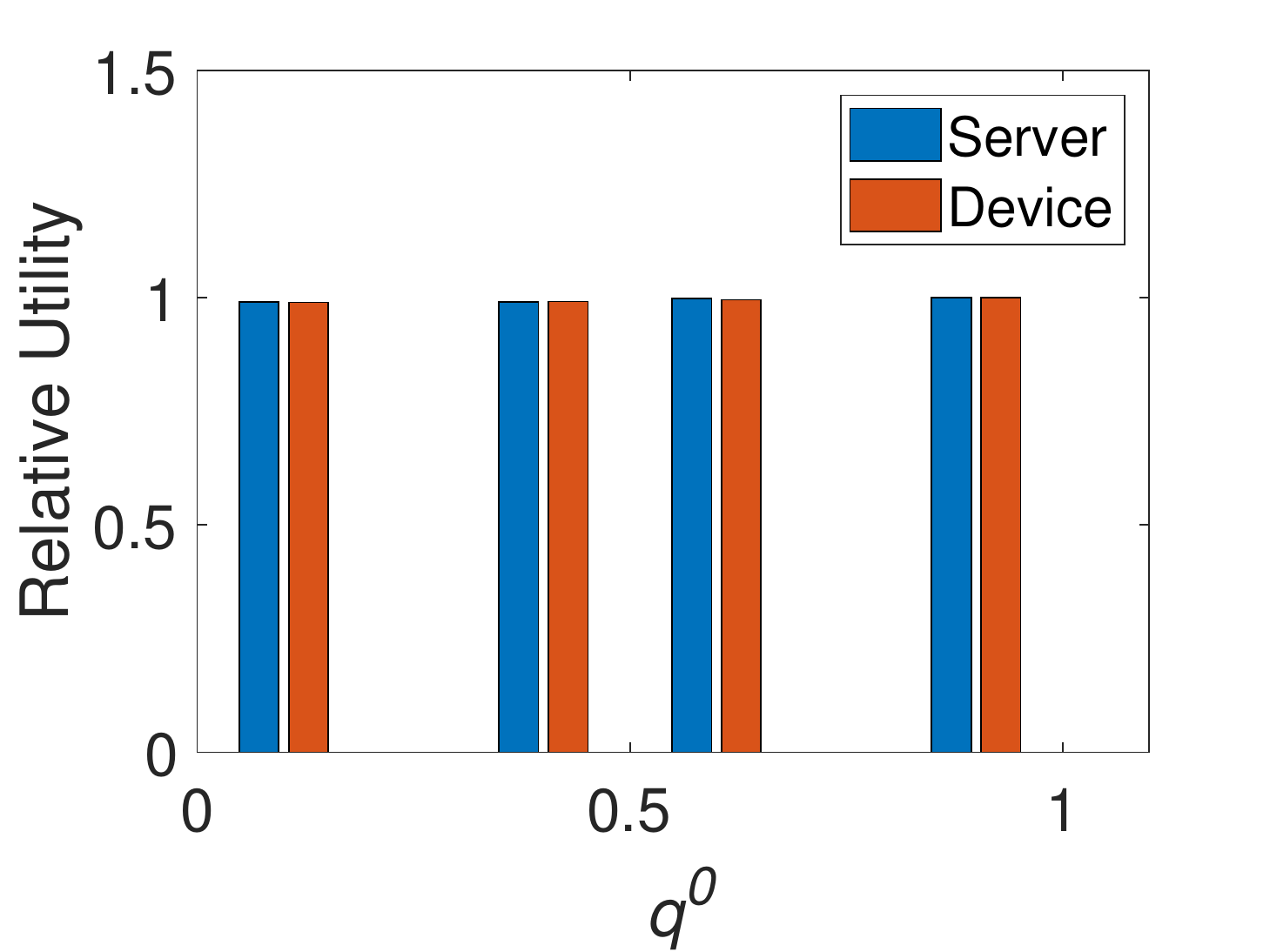}
\caption{Relative utilities of the server and the device at the stable state.}
\label{fig:ex3_stable}
\end{figure}

\begin{figure}[htbp]
\subfigure[$q^0=0.10$.]{
\includegraphics[width=0.23\textwidth]{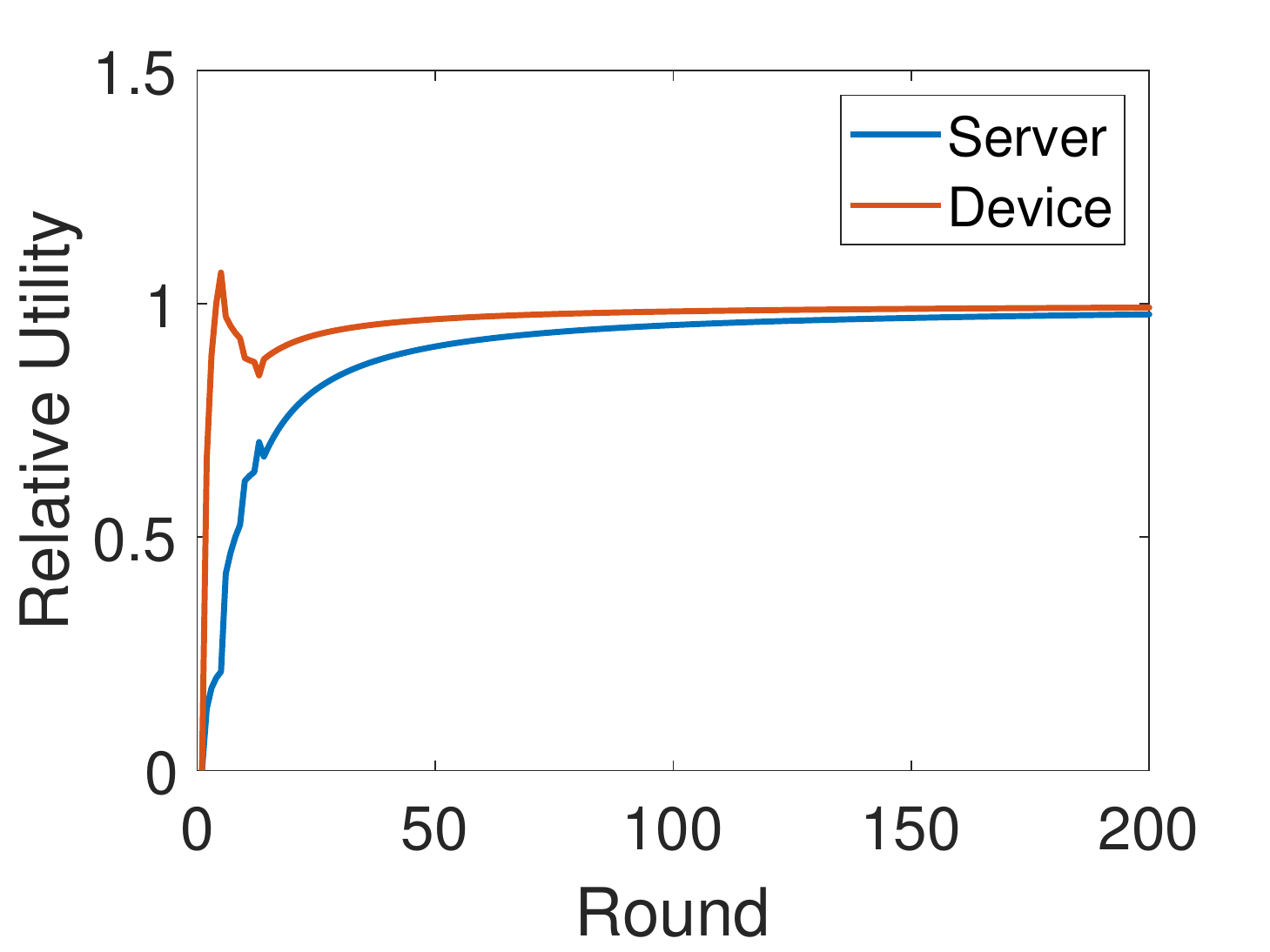}}
\subfigure[$q^0=0.40$.]{
\includegraphics[width=0.23\textwidth]{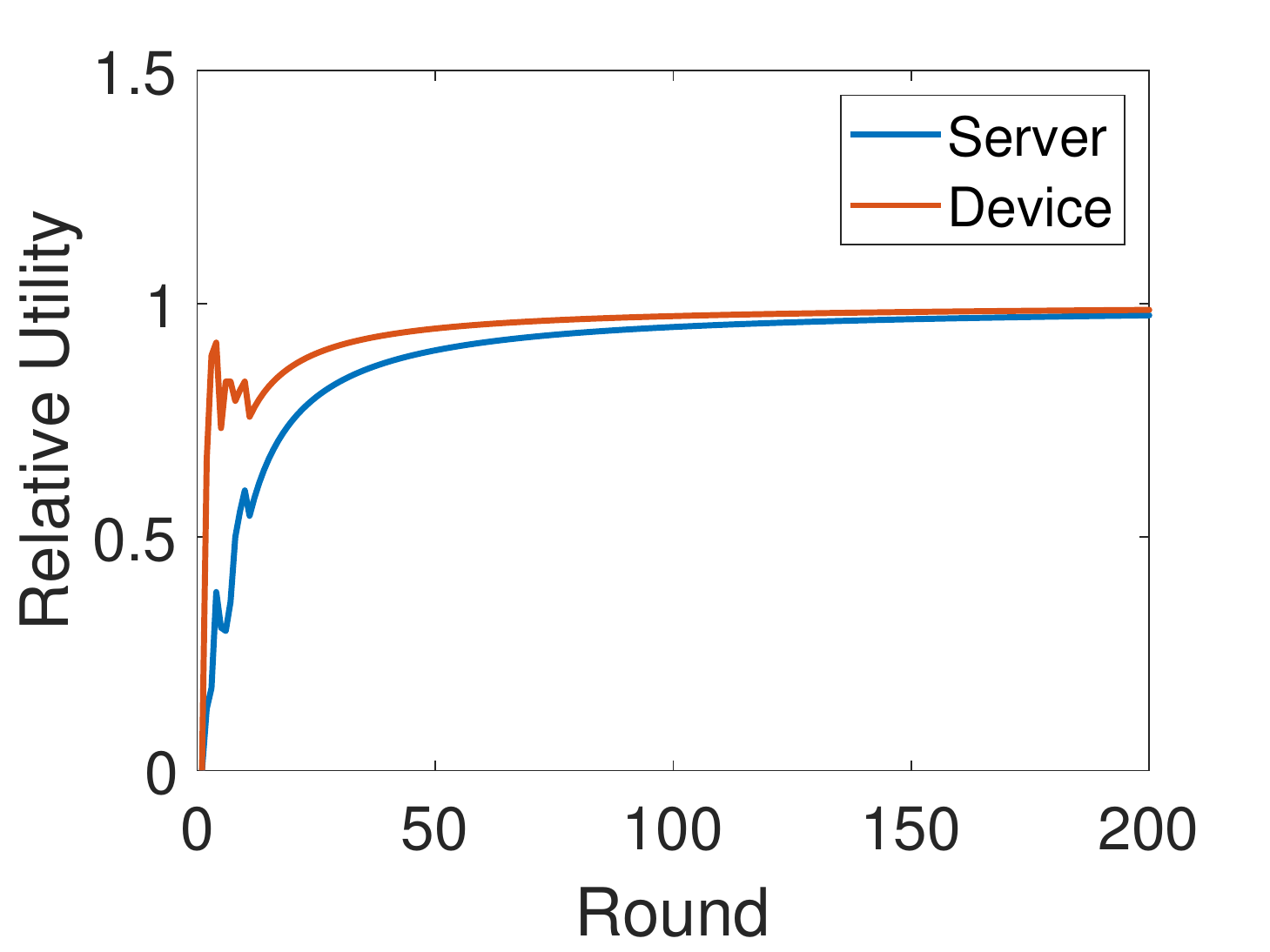}}
\subfigure[$q^0=0.60$.]{
\includegraphics[width=0.23\textwidth]{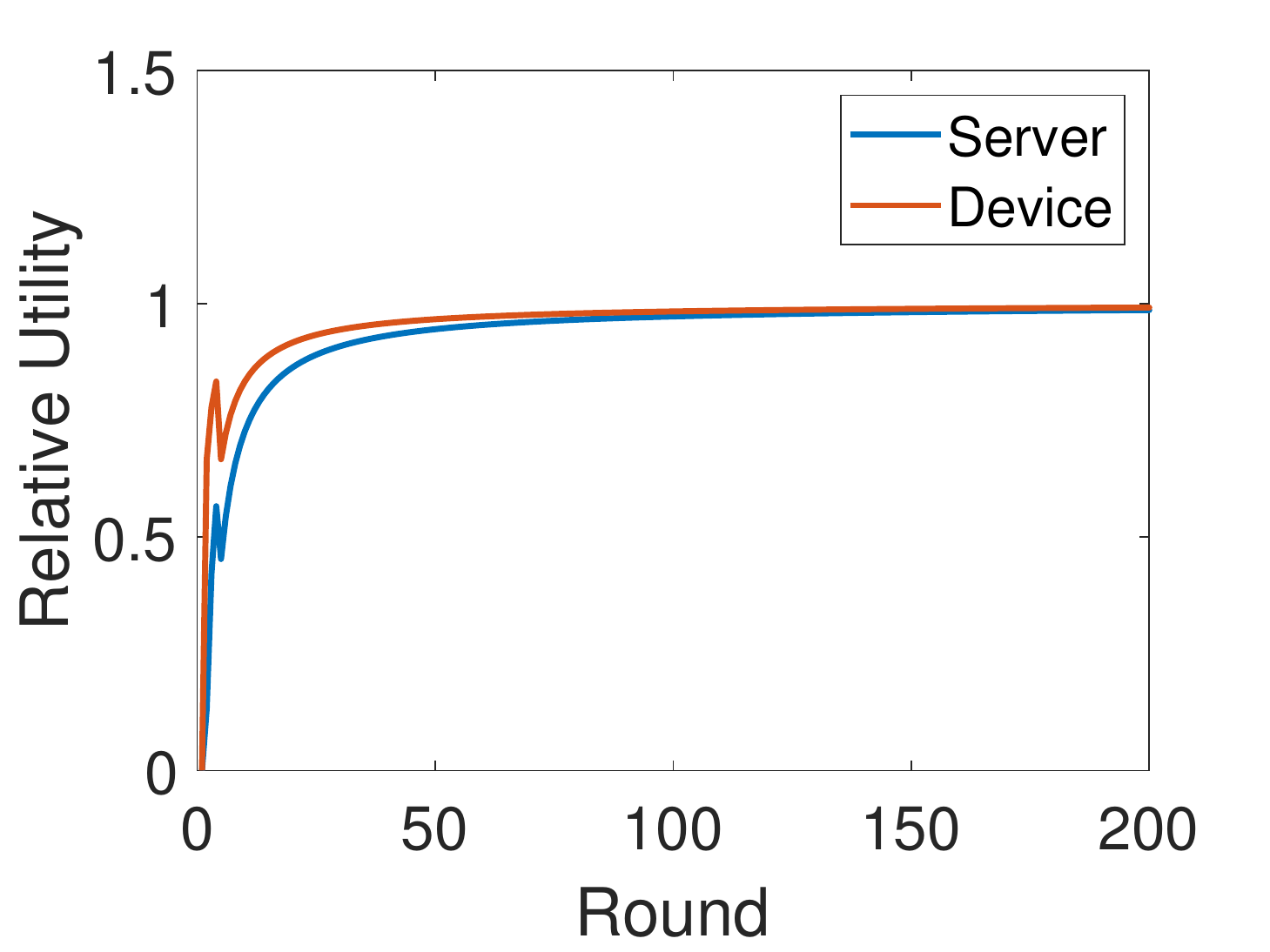}}
\subfigure[$q^0=0.90$.]{
\includegraphics[width=0.23\textwidth]{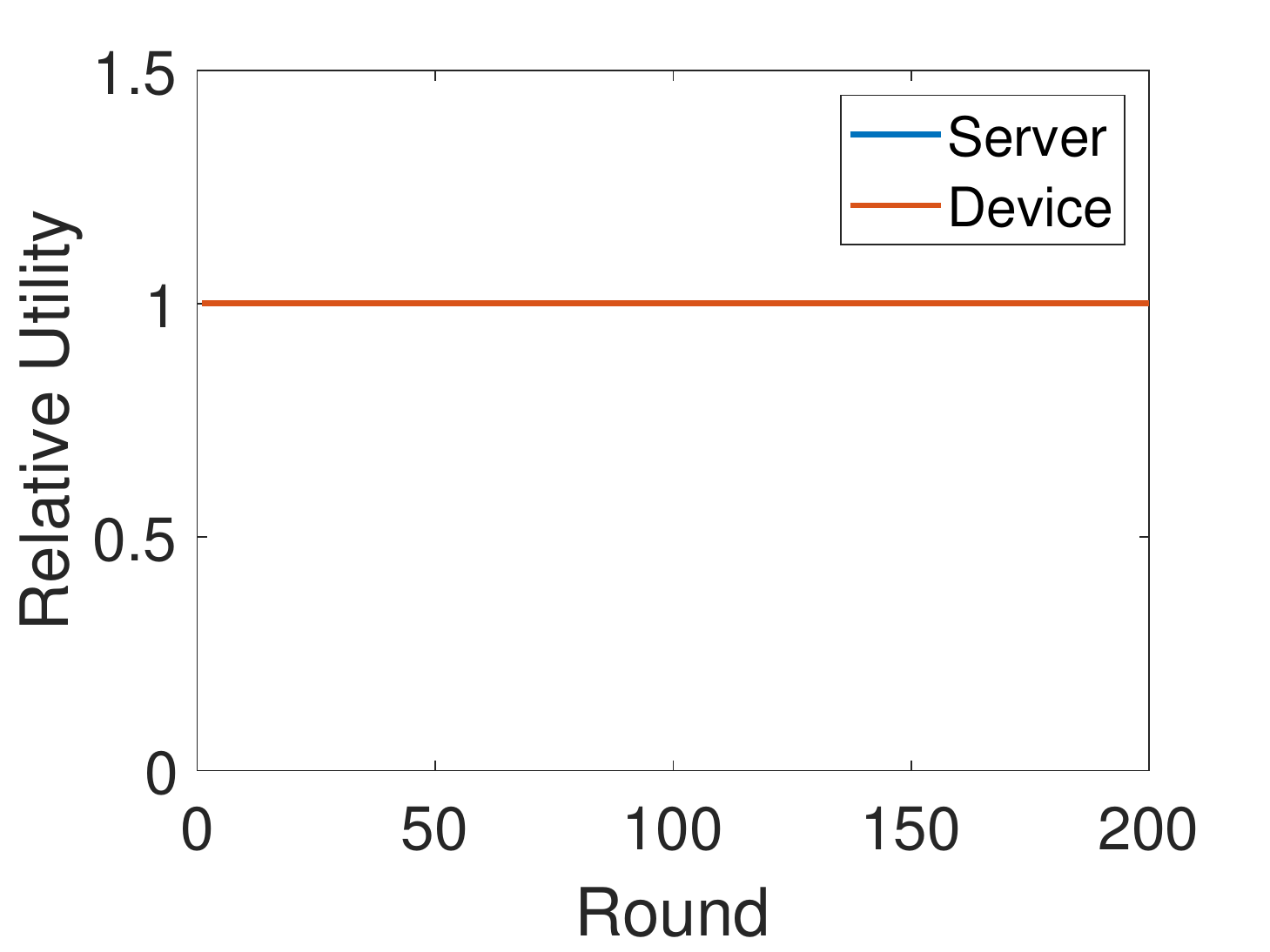}}
\centering
\caption{Relative utilities of the server and the device in dynamic.}
\label{fig:ex3_dynamic}
\end{figure}

Further, we study the dynamics of utilities with the server adopting four other classical strategies and report the experimental results in Fig. \ref{fig:ex4}. One can find that four classical strategies bring different evolution utility paths, especially at the beginning, but all of them converge to the stable result in which the server obtains less utility than the device, which cannot meet the server's expectation. 

\begin{figure}[h]
\centering
\subfigure[ALLC.]{
\label{fig:ex4_allc}
\includegraphics[width=0.23\textwidth]{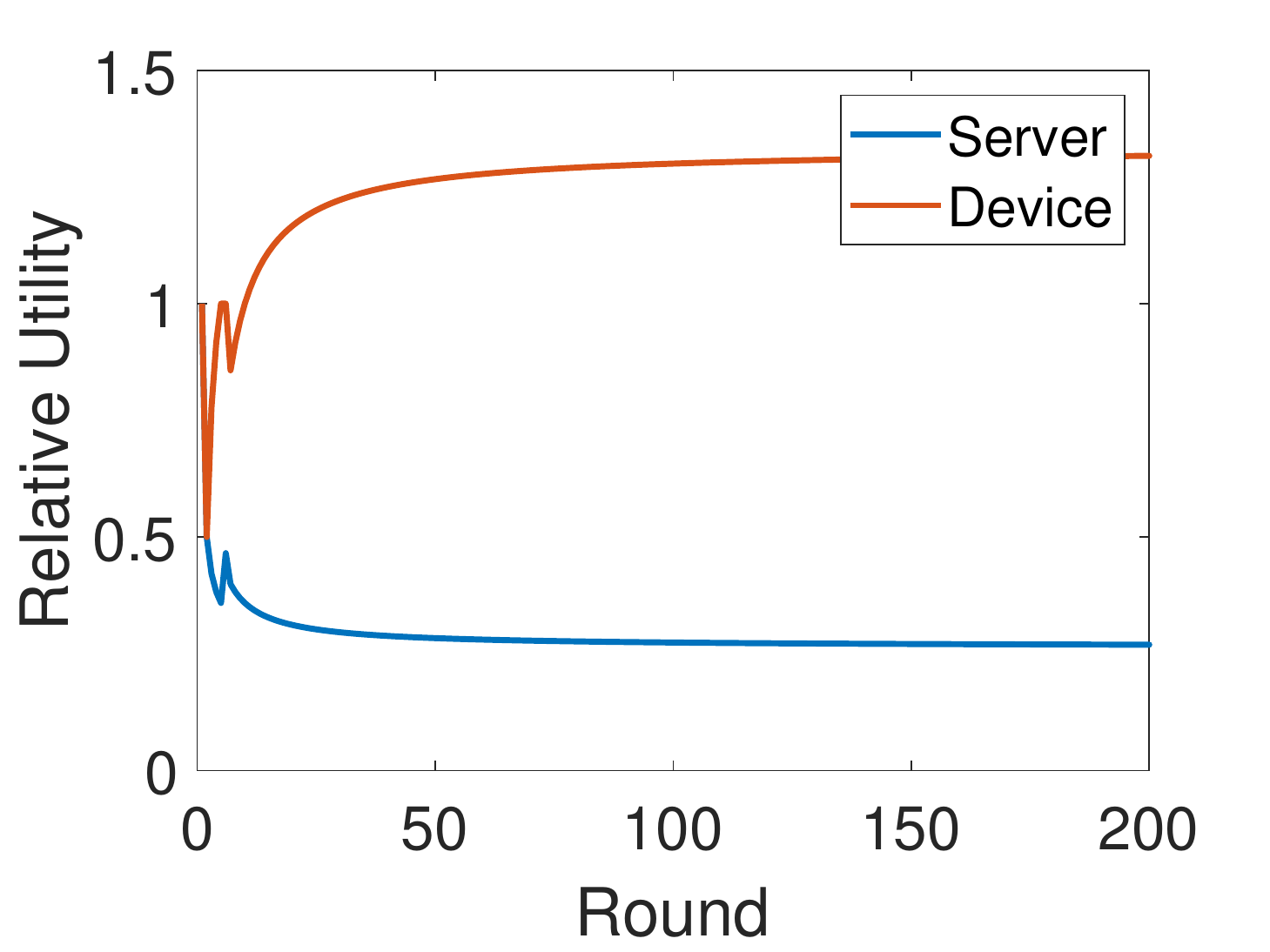}}
\subfigure[ALLD.]{
\label{fig:ex4_alld}
\includegraphics[width=0.23\textwidth]{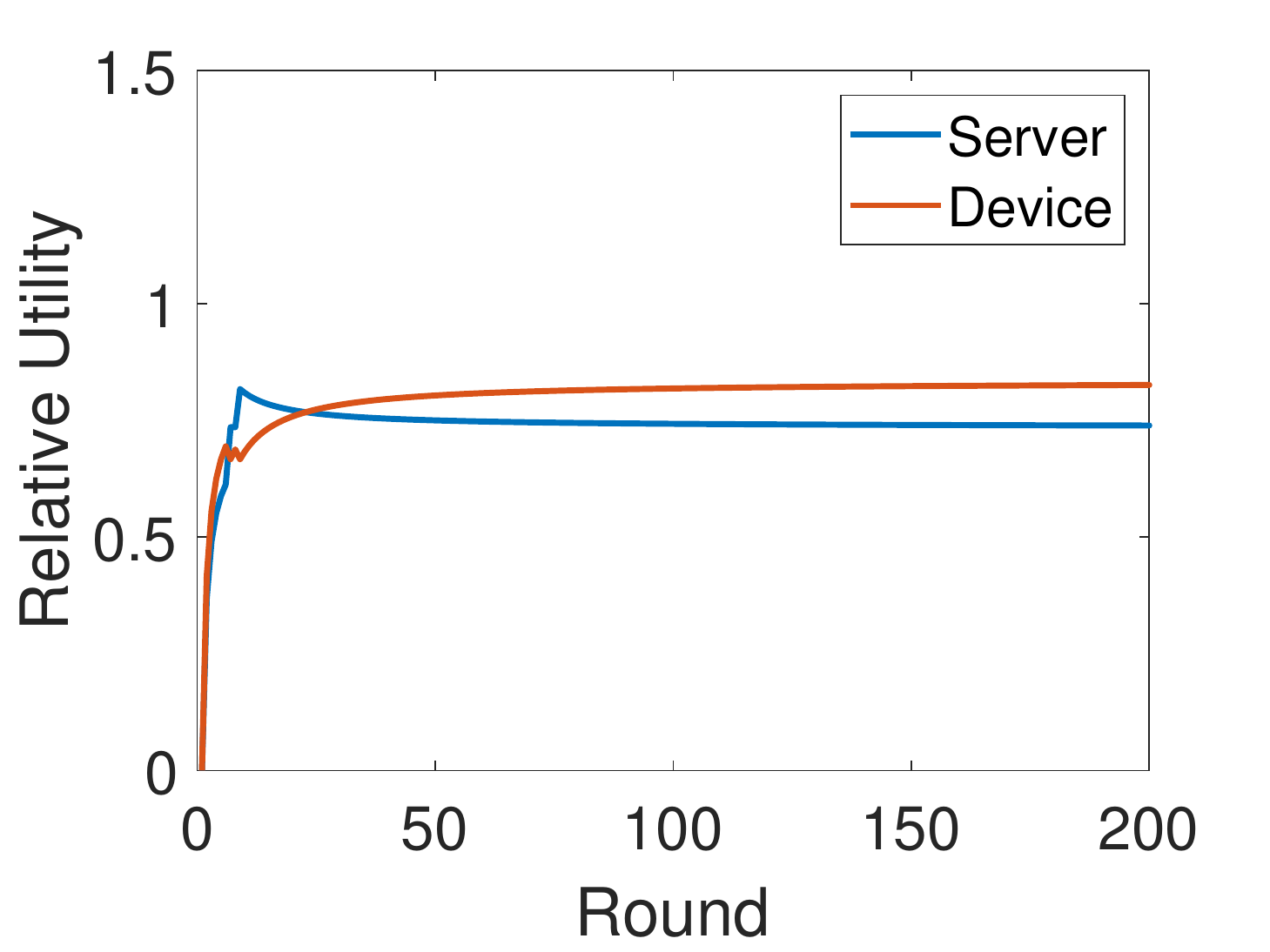}}
\subfigure[TFT.]{
\label{fig:ex4_tft}
\includegraphics[width=0.23\textwidth]{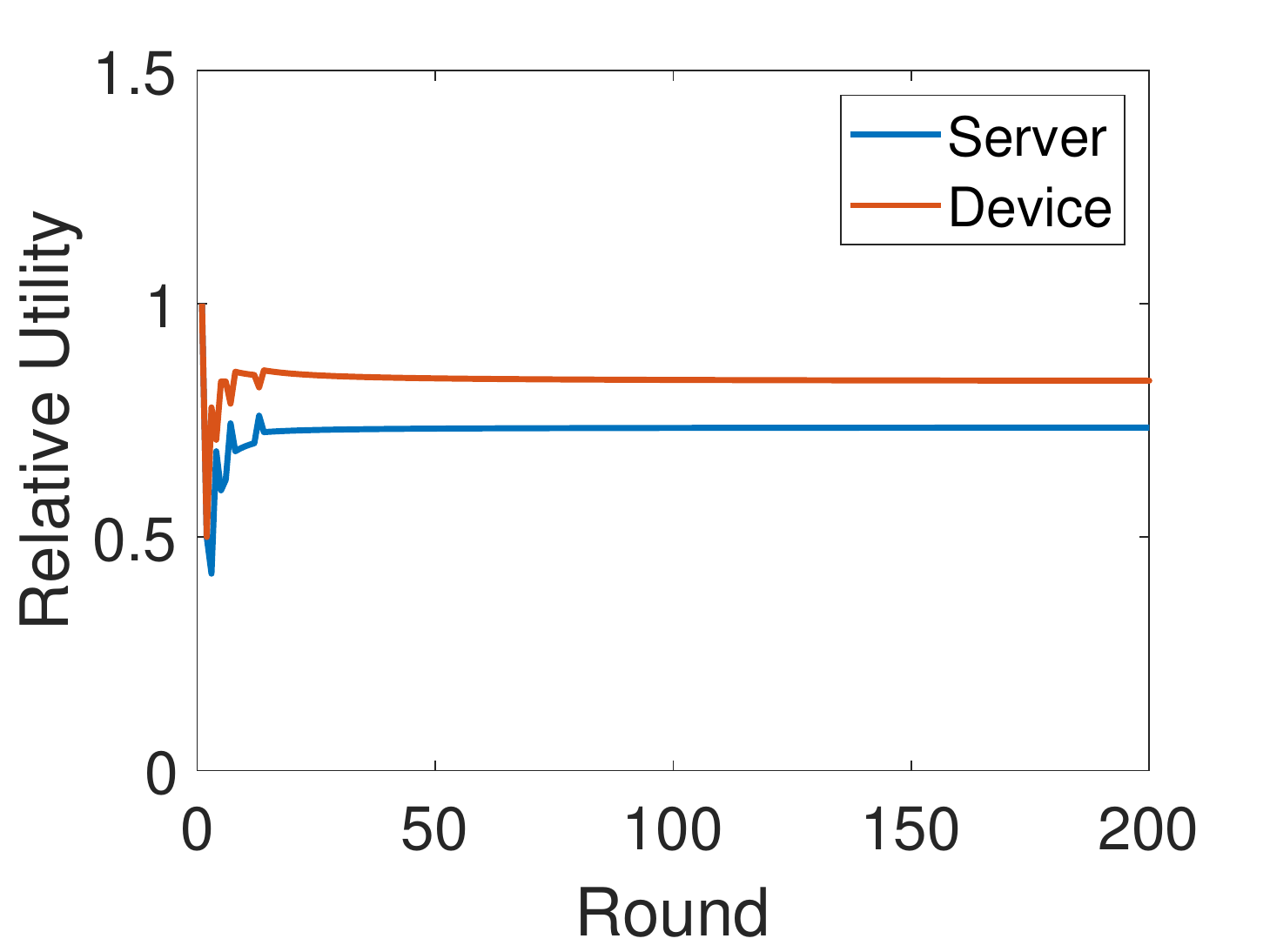}}
\subfigure[WSLS.]{
\label{fig:ex4_wsls}
\includegraphics[width=0.23\textwidth]{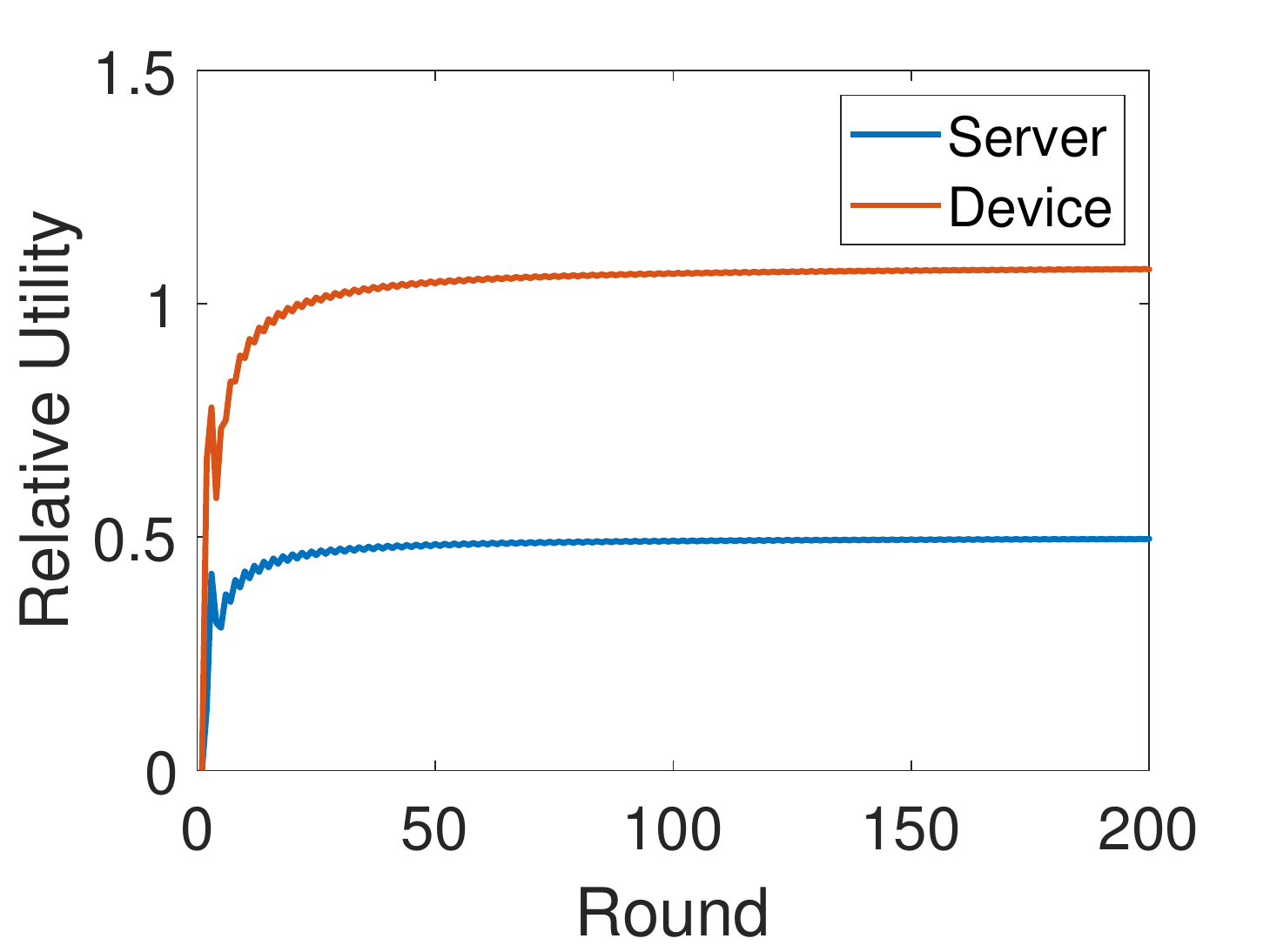}}
\caption{Dynamic relative utilities of the server and the device given different strategies adopted by the server.}
\label{fig:ex4}
\end{figure}

\subsection{Impacts of the extortion factor}

As can be observed in Section \ref{sec:extortion}, the extortion factor $\chi$ in \eqref{eq:extortion_payoff} plays an important role in affecting the degree of utility difference between the server and all devices. To uncover the impact of $\chi$ on the FEL game, we investigate the changing trend of the cooperation probability from any device and the corresponding utility evolution dynamics with different extortion factor in this section, where the initial cooperation probability of the device is set as $q^0=0.5$. 
Detailed experimental results are respectively reported in Figs. \ref{fig:cooperation_chi} and \ref{fig:utility_chi}.

According to Fig. \ref{fig:cooperation_chi}, we can observe that the higher the extortion factor, the longer time is needed for the device becoming fully cooperative. Taking the case of $\chi=1$ as an example, the convergence round of realizing $q^t=1$ is about 10; while for $\chi=4$, the cooperation probability of the device converges to 1 after 50 rounds. 
This phenomenon suggests that even though the server can relatively dominate in the FEL game using the CE strategy, it is not a wise choice for her to enforce severely imbalance expected utilities since the time consumption for eliciting the cooperation from devices can be large. 

With respect to the impact of $\chi$ on the utilities of the server and the device, we can have some clues from Fig. \ref{fig:utility_chi}. Although the specific evolution paths of the instant utilities are different with varying $\chi$, the stable results are the same where each player obtains the utility of mutual cooperation. This outcome implies that the extortion factor in the CE strategy has few impact on the utilities that each player can obtain at the stable state. The underlying reason is that the power CE strategy can drive the device to fully collaborate given any $\chi$, which leads to mutual cooperation and thus the same level of relative utilities for all players. In fact, this consequence is also complying with the fairness feature of the CE strategy as presented earlier.

\begin{figure}[htbp]
\centering
\includegraphics[width=0.35\textwidth]{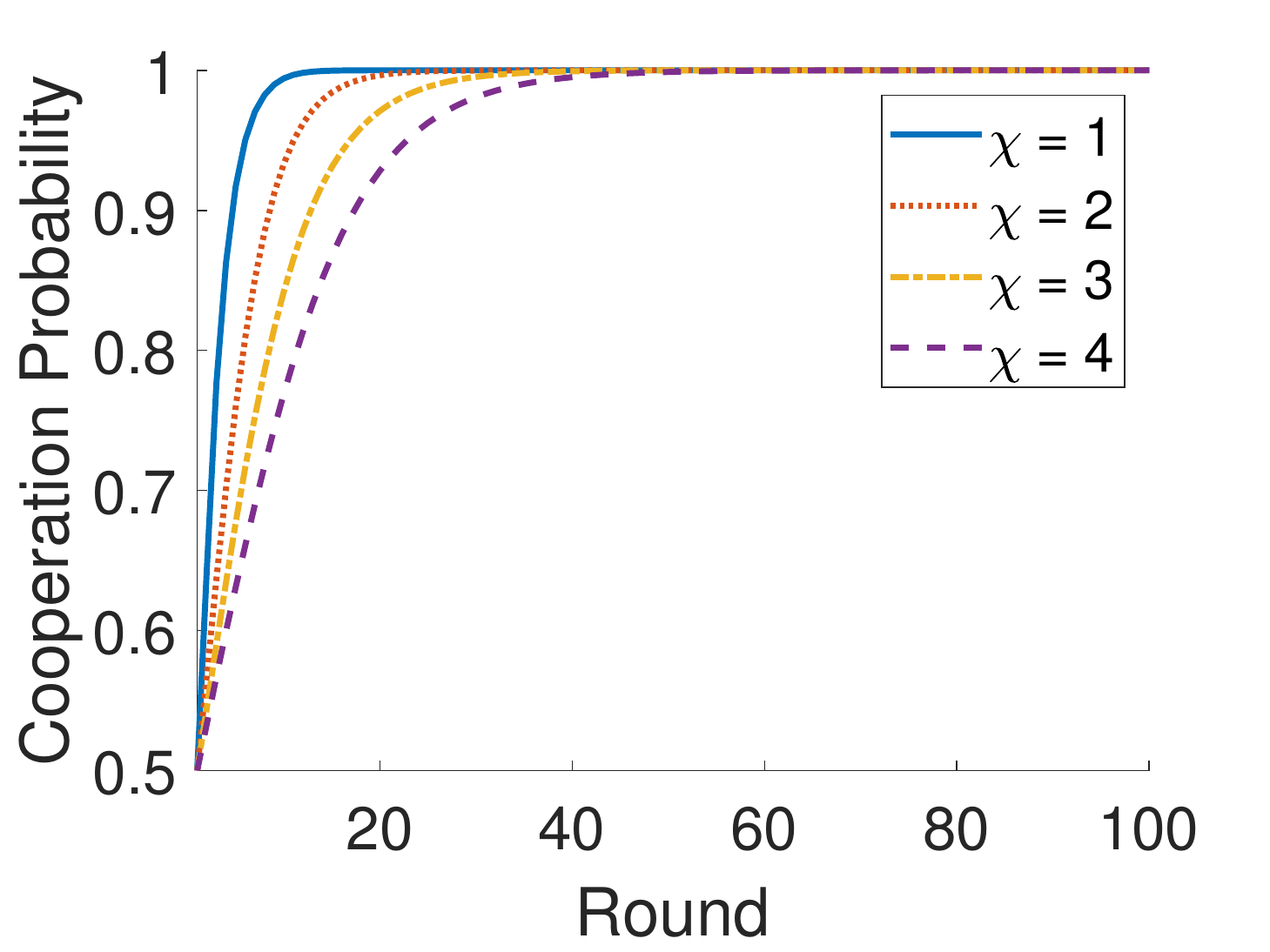}
\caption{Cooperation probability dynamics of the evolutionary device for different $\chi$.}
\label{fig:cooperation_chi}
\end{figure}

\begin{figure}[htbp]
\subfigure[$\chi=1$.]{
\includegraphics[width=0.23\textwidth]{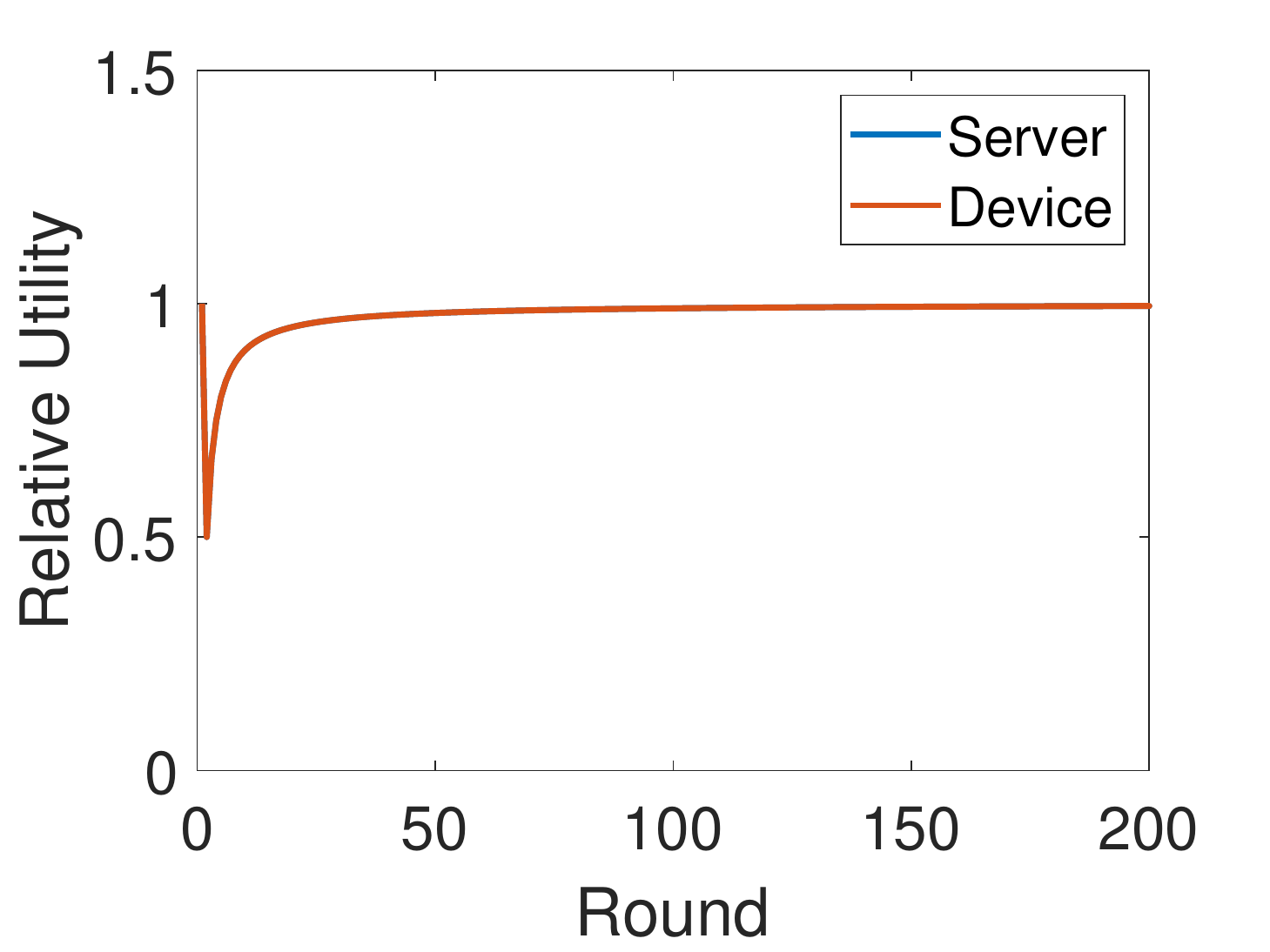}}
\subfigure[$\chi=2$.]{
\includegraphics[width=0.23\textwidth]{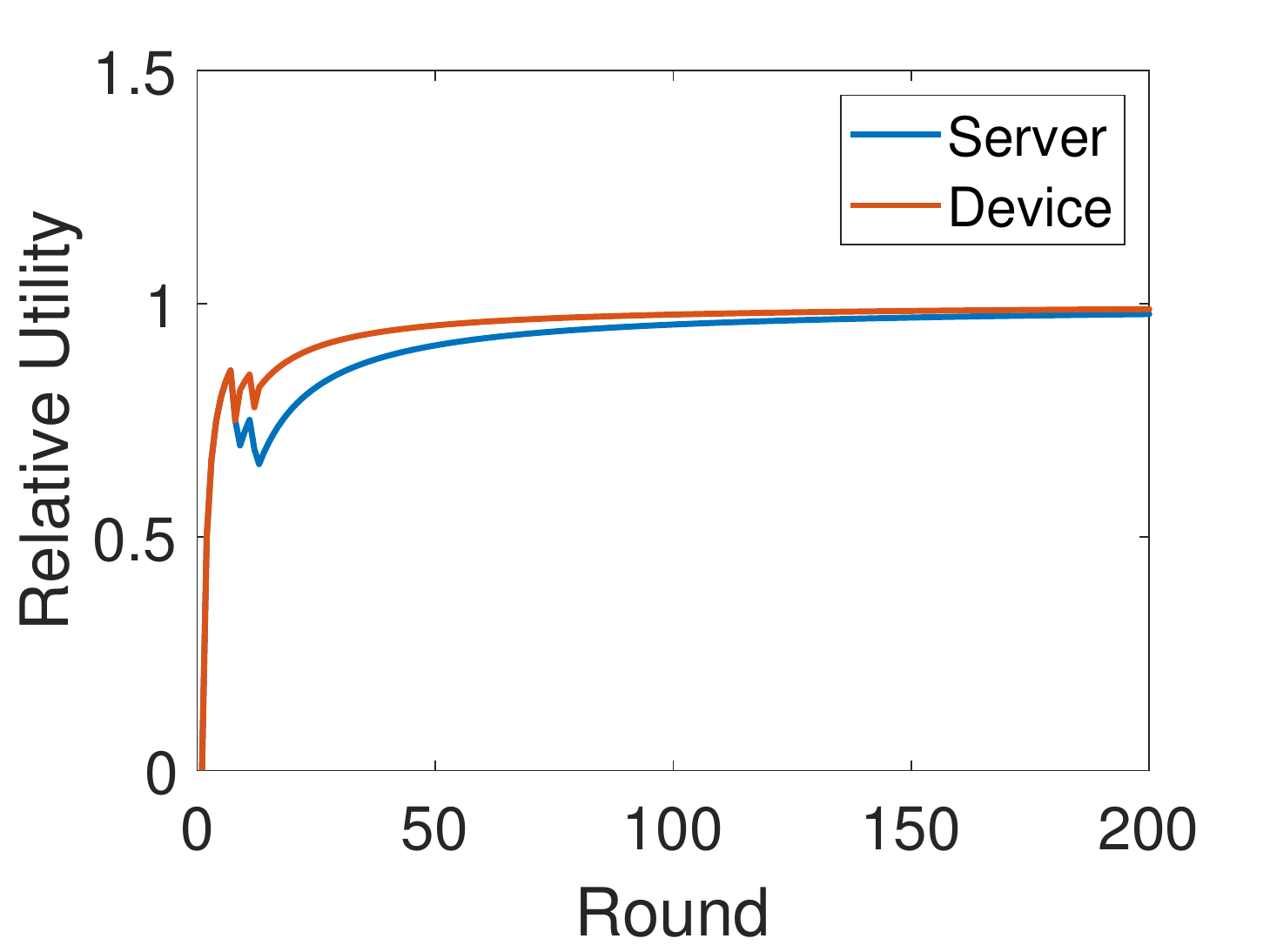}}
\subfigure[$\chi=3$.]{
\includegraphics[width=0.23\textwidth]{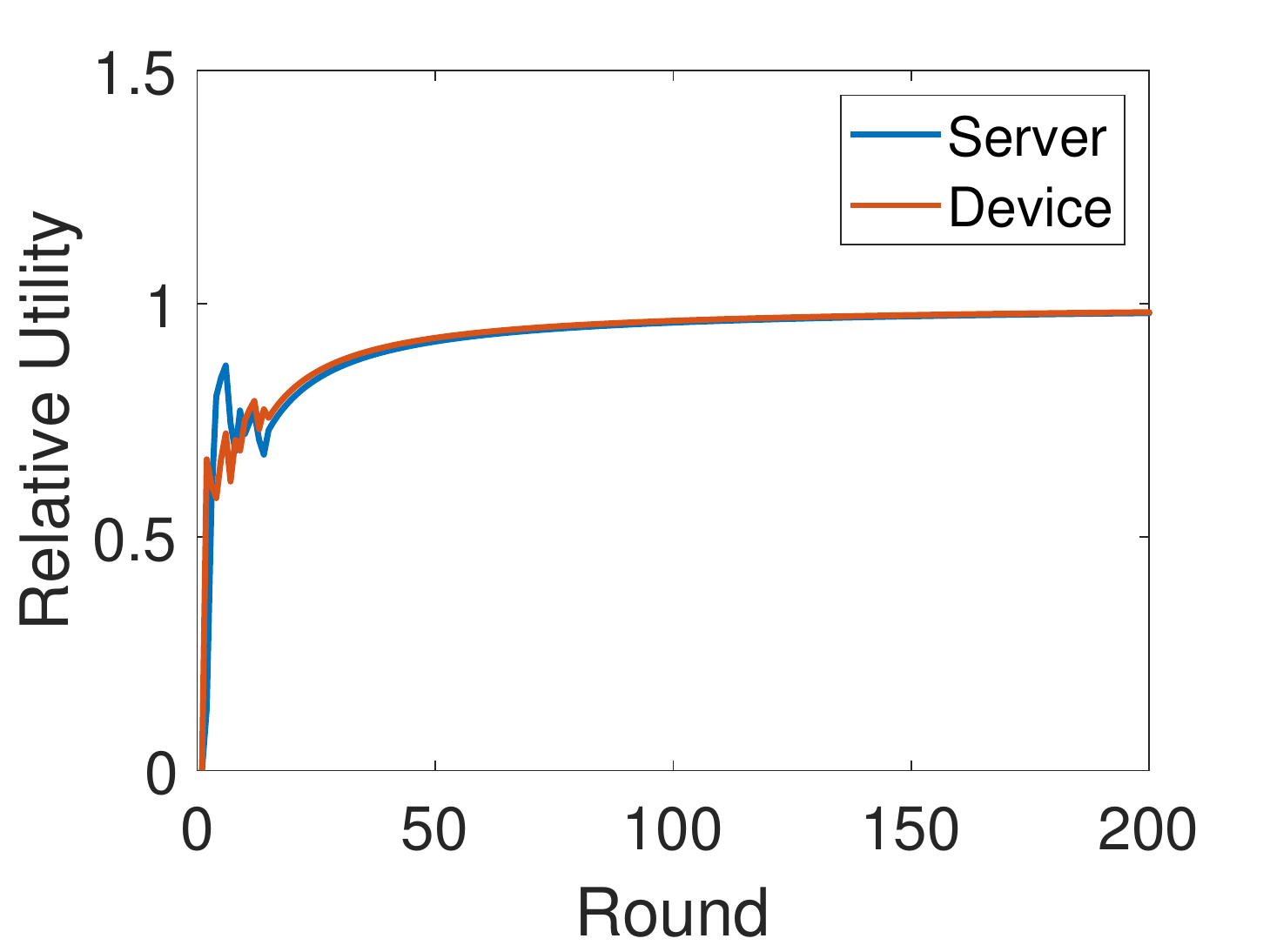}}
\subfigure[$\chi=4$.]{
\includegraphics[width=0.23\textwidth]{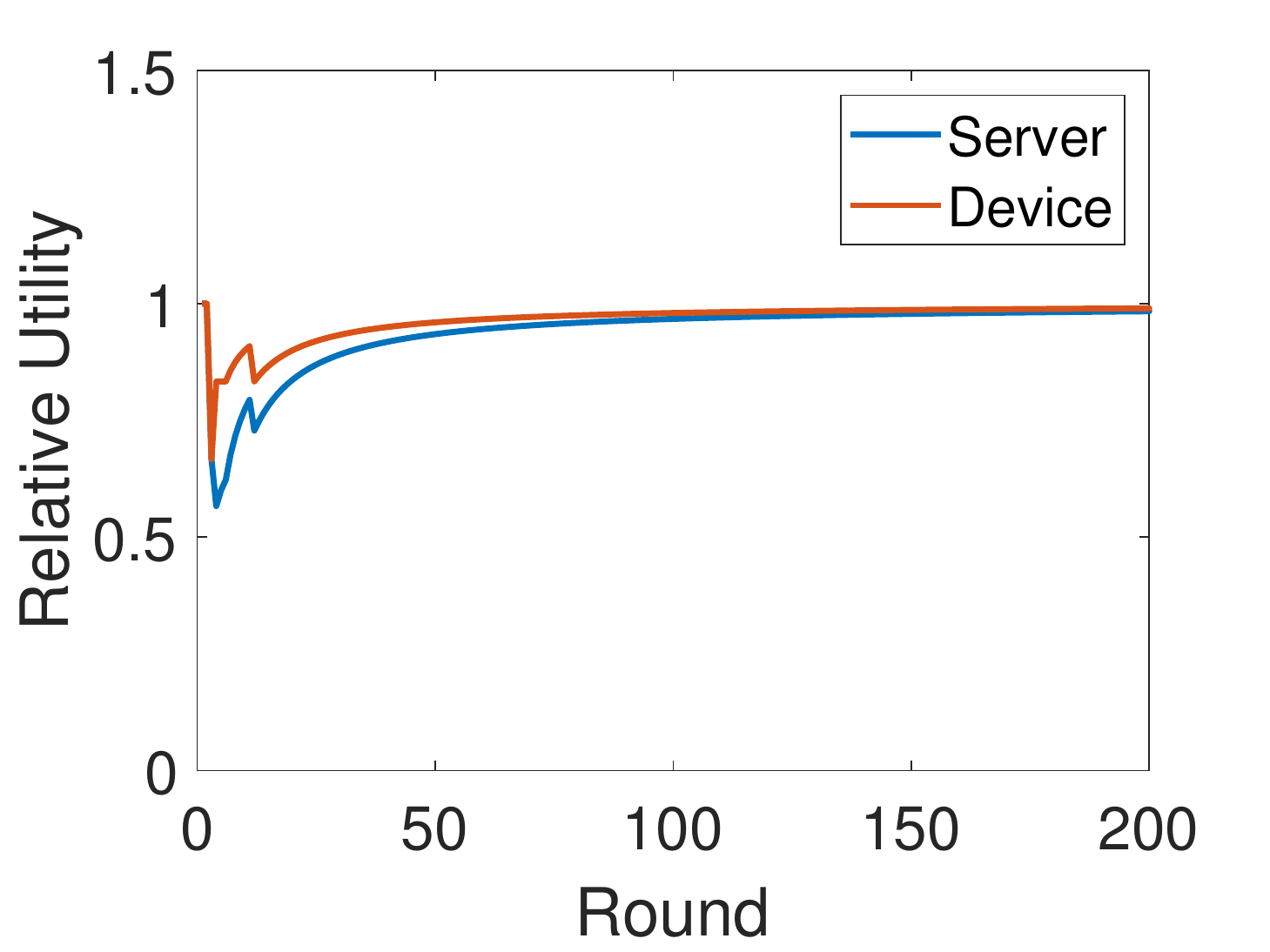}}
\centering
\caption{Relative utilities of the server and the device in dynamic with varying $\chi$.}
\label{fig:utility_chi}
\end{figure}

\section{Conclusion}\label{sec:conclusion}

In this paper, we investigate the problem of optimizing the FEL system performance via eliminating the selfish device behaviors. Specifically, we model the interactions between the edge server and the devices as a multi-player simultaneous game, based on which we derive a CE strategy to collectively control the proportional relationship between the utility of the server and that of the devices. Based on this CE strategy, 
the server can efficiently enforce full contribution of all devices without concerning about her utility, %What's more, the proposed CE strategy can  application scope of the original extortion strategy which is derived in the two-player game. 
which is both theoretically analyzed and experimentally evaluated. Essentially, the proposed CE strategy is impartial for both the adopter and the  opponents, indicating its liveness to maintain the stability of the FEL systems.
%Both theoretical analysis and experimental evaluation illustrate the effectiveness and fairness of the CE strategy. 

In the future, we plan to examine the efficiency and scalability of the proposed game-theoretic scheme in playing against selfish devices in FEL. 
Besides, we will explore more intelligent  solutions about countering other malicious behaviors of devices in FEL, where dynamically joining and leaving the learning process will be discussed to describe more realistic scenarios.

\bibliographystyle{IEEEtran}
\bibliography{reference}

\begin{IEEEbiography}[{\includegraphics[width=1in,height=1.25in,clip,keepaspectratio]{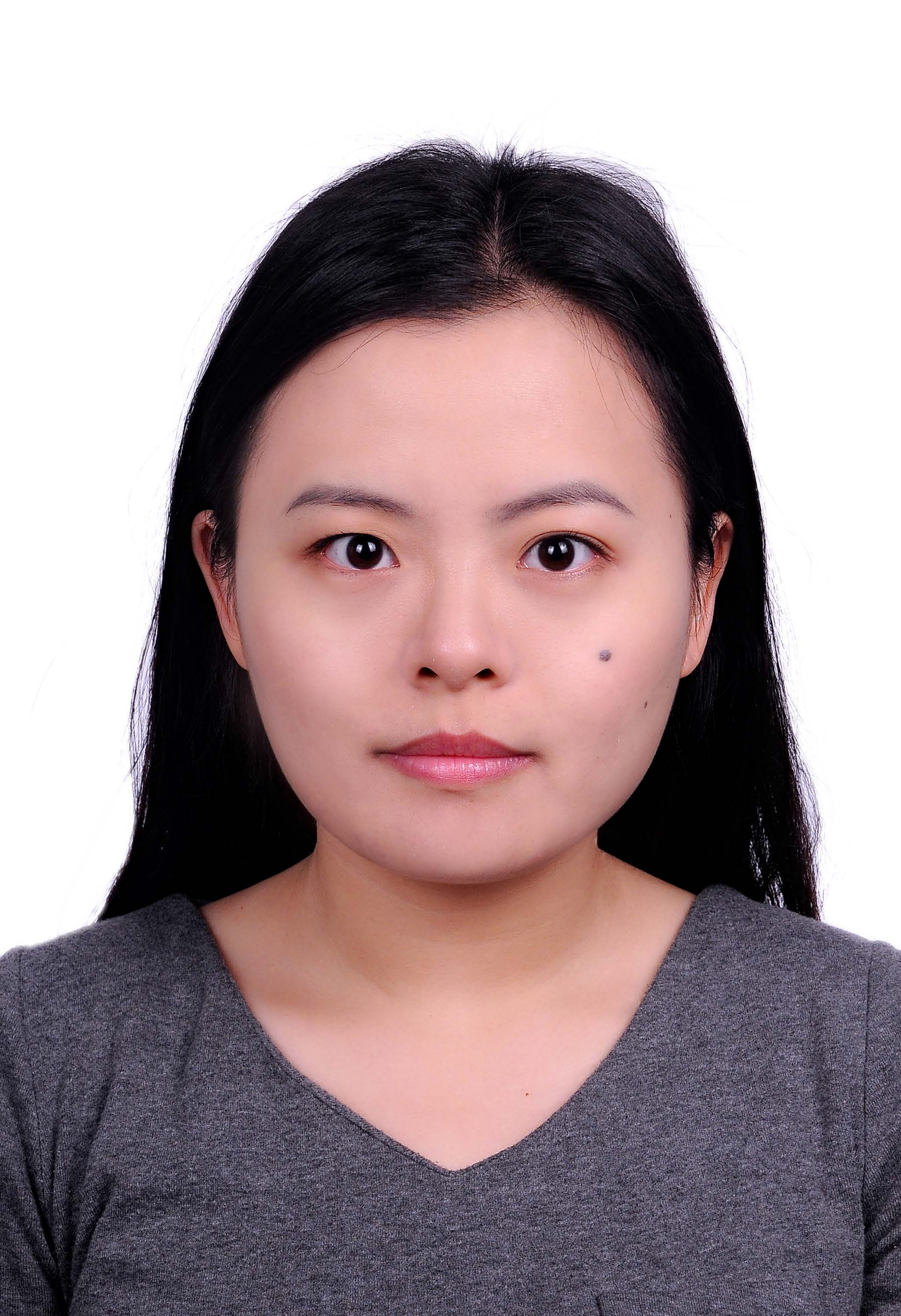}}]{Qin Hu} received her Ph.D. degree in Computer Science from the George Washington University in 2019. She is currently an Assistant Professor with the Department of Computer and Information Science, Indiana University-Purdue University Indianapolis (IUPUI). Her research interests include wireless and mobile security, edge computing, blockchain, and crowdsourcing/crowdsensing.
\end{IEEEbiography}

\begin{IEEEbiography}[{\includegraphics[width=1in,height=1.25in,clip,keepaspectratio]{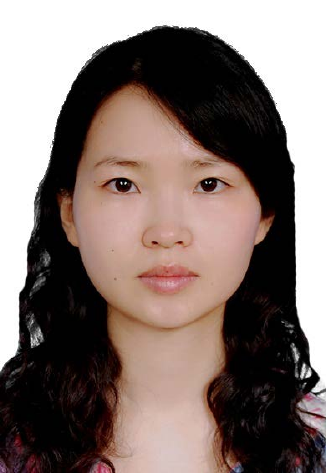}}]{Shengling Wang}
is a full professor in the School
of Artificial Intelligence, Beijing Normal University.
She received her Ph.D. in 2008 from Xian Jiaotong
University. After that, she did her postdoctoral research
in the Department of Computer Science and
Technology, Tsinghua University. Then she worked
as an assistant and associate professor from 2010
to 2013 in the Institute of Computing Technology
of the Chinese Academy of Sciences. Her research
interests include mobile/wireless networks, game theory, crowdsourcing.
\end{IEEEbiography}

\begin{IEEEbiography}[{\includegraphics[width=1in,height=1.25in,clip,keepaspectratio]{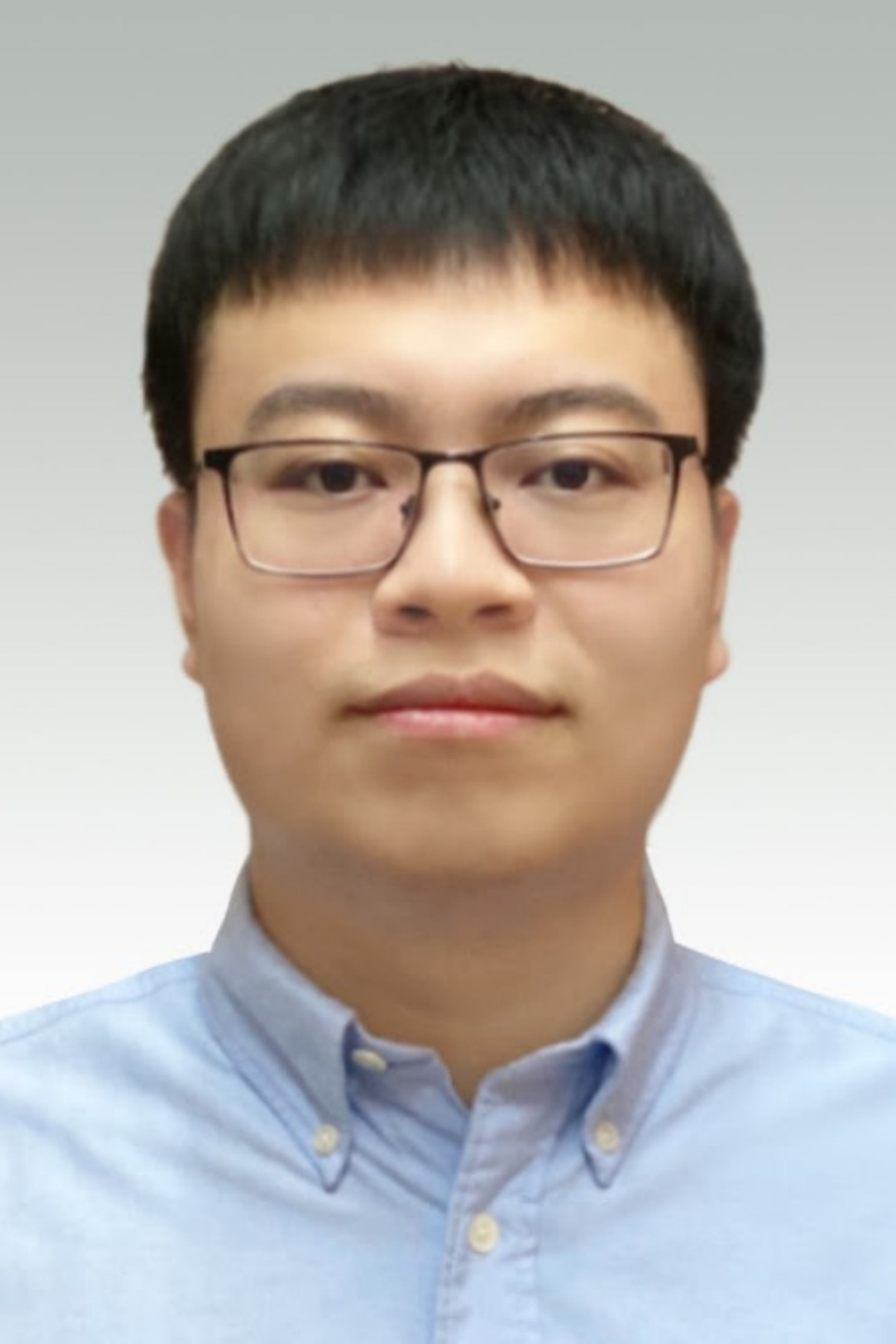}}]{Zehui Xiong} is currently an Assistant Professor in the Pillar of Information Systems Technology and Design, Singapore University of Technology and Design. Prior to that, he was a researcher with Alibaba-NTU Joint Research Institute, Singapore. He received the PhD degree in Nanyang Technological University, Singapore. He was the visiting scholar at Princeton University and University of Waterloo. His research interests include wireless communications, network games and economics, blockchain, and edge intelligence. He has published more than 120 research papers in leading journals and flagship conferences and 6 of them are ESI Highly Cited Papers. He has won several Best Paper Awards in international conferences and technical committee. He is now serving as the editor or guest editor for many leading journals including IEEE JSAC, TVT, IoTJ, TCCN, and TNSE. He is the recipient of the Chinese Government Award for Outstanding Students Abroad in 2019, and NTU SCSE Best PhD Thesis Runner-Up Award in 2020. He is the Founding Vice Chair of Special Interest Group on Wireless Blockchain Networks in IEEE Cognitive Networks Technical Committee.

\end{IEEEbiography}

\begin{IEEEbiography}[{\includegraphics[width=1in,height=1.25in,clip,keepaspectratio]{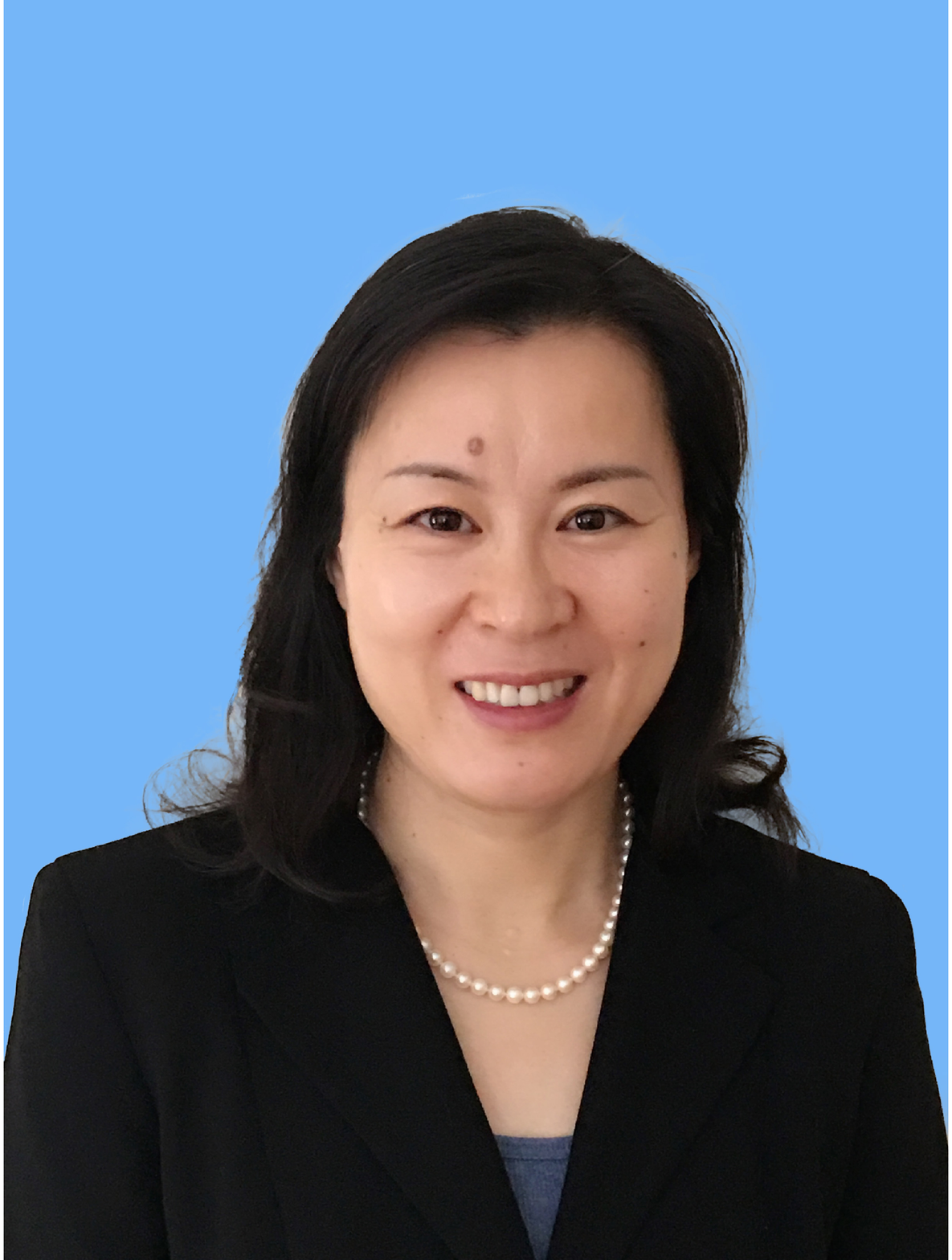}}]{Xiuzhen Cheng}
received her M.S. and Ph.D.
degrees in computer science from the University of
Minnesota Twin Cities in 2000 and 2002. She is currently a
professor in the School
of Computer Science and Technology, Shandong University, China. 
Her current research interests focus on privacy-aware
computing, wireless and mobile security, dynamic
spectrum access, mobile handset networking systems
(mobile health and safety), cognitive radio networks,
and algorithm design and analysis. She has served
on the Editorial Boards of several technical publications and the Technical Program Committees of various professional
conferences/workshops. She has also chaired several international conferences.
She worked as a program director for the U.S. National Science Foundation
(NSF) from April to October 2006 (full time), and from April 2008 to May
2010 (part time). She published more than 300 peer-reviewed papers.
\end{IEEEbiography}

\end{document}